\definecolor{darkgreen}{rgb}{0.6, 0.8, 0.0}
\newtheorem{theorem}{Theorem}
\newtheorem{lemma}{Lemma}
\newtheorem{proposition}{Proposition}
\newtheorem{definition}{Definition}
\newcommand{\R}{\mathbb{R}}
\newcommand{\poly}{\mathrm{poly}}
\newcommand{\TCzero}{\mathsf{TC}^{0}}
\renewcommand{\P}{\mathbb{P}}
\newcommand{\E}{\mathbb{E}} 
\title{Let Me Think! A Long Chain-of-Thought Can Be Worth Exponentially Many Short Ones}
\author{
Parsa Mirtaheri\thanks{Equal contribution.} \\
UC San Diego \\
\texttt{parsa@ucsd.edu}
\and
Ezra Edelman\footnotemark[1] \\
University of Pennsylvania \\
\texttt{ezrae@seas.upenn.edu}
\and
Samy Jelassi \\
Harvard University \\
\texttt{sjelassi@fas.harvard.edu}
\and
Eran Malach \\
Harvard University\thanks{Currently at Apple.} \\
\texttt{emalach@g.harvard.edu}
\and
Enric Boix-Adserà \\
University of Pennsylvania \\
\texttt{eboix@wharton.upenn.edu}
}
\date{}
\colorlet{linkequation}{blue}
\begin{document}

\maketitle

\begin{abstract}

Inference-time computation has emerged as a promising scaling axis for improving large language model reasoning. However, despite yielding impressive performance, the optimal allocation of inference-time computation remains poorly understood. A central question is whether to prioritize sequential scaling (e.g., longer chains of thought) or parallel scaling (e.g., majority voting across multiple short chains of thought). In this work, we seek to illuminate the landscape of test-time scaling by demonstrating the existence of reasoning settings where sequential scaling offers an exponential advantage over parallel scaling. These settings are based on graph connectivity problems in challenging distributions of graphs. We validate our theoretical findings with comprehensive experiments across a range of language models, including models trained from scratch for graph connectivity with different chain of thought strategies as well as large reasoning models. Our code is available at \href{https://github.com/seyedparsa/let-me-think}{https://github.com/seyedparsa/let-me-think}.

\end{abstract}

\section{Introduction}
\label{sec:intro}

Large Language Model (LLM) scaling has recently undergone a paradigm shift toward increasing the amount of compute used during inference~\citep{snell2025scaling,wu2024inference}, moving beyond traditional axes such as model size, training data and pretraining compute~\citep{radford2018improving,kaplan2020scaling}. Scaling inference-time compute is particularly important for reasoning tasks, and is a key ingredient in OpenAI's o-series models \cite{o1}, DeepSeek-R1 \cite{r1} among other frontier models \cite{s1,team2025kimi,qwen2.5,qwq32b}.

Despite the impressive performance of these systems, the central question of how to optimally allocate inference-time compute is not yet settled. The main challenge is that the space of  strategies that use compute at test time is large and diverse: a wide variety of methods exist \cite{wu2024inference,Rush2022TestTimeScaling,welleck2024from}, each with its own empirical scaling law \citep{wu2024inference,snell2025scaling}. Additionally, different methods can sometimes be combined, which further complicates any analysis.

In this paper, we seek fundamental and general principles that help clarify the landscape of inference-time compute. Since there is a large range of inference-time methods, in order to make progress we categorize methods into two classes~\citep{s1}: (1) \textit{parallel scaling} and (2) \textit{sequential scaling}. We review these notions below.

\begin{enumerate}
\item[(1)] \textbf{Parallel scaling} refers to generating multiple independent responses in parallel, and aggregating them in some way to output the final solution \cite{brown2024monkeys,wang2023selfconsistency}. The most common aggregation technique is ``best-of-$n$'', where a reward function (e.g. another language model or a dedicated verifier \citep{brown2024monkeys}) selects the single highest-scoring response as the output. Another widely used aggregation method is majority voting, which determines the final response by choosing the most frequent one among all generated responses 
\citep{brown2024monkeys}. 

\item[(2)]\textbf{Sequential scaling} encompasses all techniques that do not fall under parallel scaling. The flagship method in this category is Chain of Thought (CoT) \citep{wei2022chain, nye2022show, kojima2023zeroshot}, in which an LLM first outputs a chain of reasoning tokens, before outputting its final answer. This may be achieved with one of several strategies to induce longer chains of reasoning in LLMs, such as adding a prompt instruction to ``think step by step'' \cite{kojima2023zeroshot}, or forcing a longer chain of thought by replacing end-of-text tokens with ``Wait'' \cite{s1}, or training with reinforcement learning objectives which can automatically induce longer chains of thought \cite{r1}.
\end{enumerate}

Consensus has yet to be reached on how to balance both types of scaling most effectively. On the one hand, sequential scaling via long chains of thought has demonstrated particular promise for tackling challenging problems, such as mathematics and coding benchmarks ~\citep{gandhi2025cognitive, yeo2025demystifying, yang2025thinkingoptimal, li2025llmseasilylearnreason,r1,s1}. On the other hand, the computational cost of inference grows quadratically in the context window for transformer-based architectures \cite{vaswani2017attention}, making sequential scaling more expensive per-token than parallel scaling. This motivates the main question addressed in this work:

\begin{center}
\textit{Can we quantify the trade-off between sequential and parallel scaling for reasoning problems?}
\end{center}

\subsection{Our contributions}
\label{sec:contributions}
Our main contribution is to introduce a reasoning task in which sequential scaling can be exponentially more powerful than parallel scaling. Namely, a small decrease in sequential scale necessitates a large increase in parallel scale to achieve the same level of accuracy. This tradeoff is illustrated in \cref{fig:leading} for transformer models evaluated on this task.

\begin{figure}[h]
  \centering
  \begin{minipage}[t]{0.49\linewidth}
    \includegraphics[width=\linewidth]{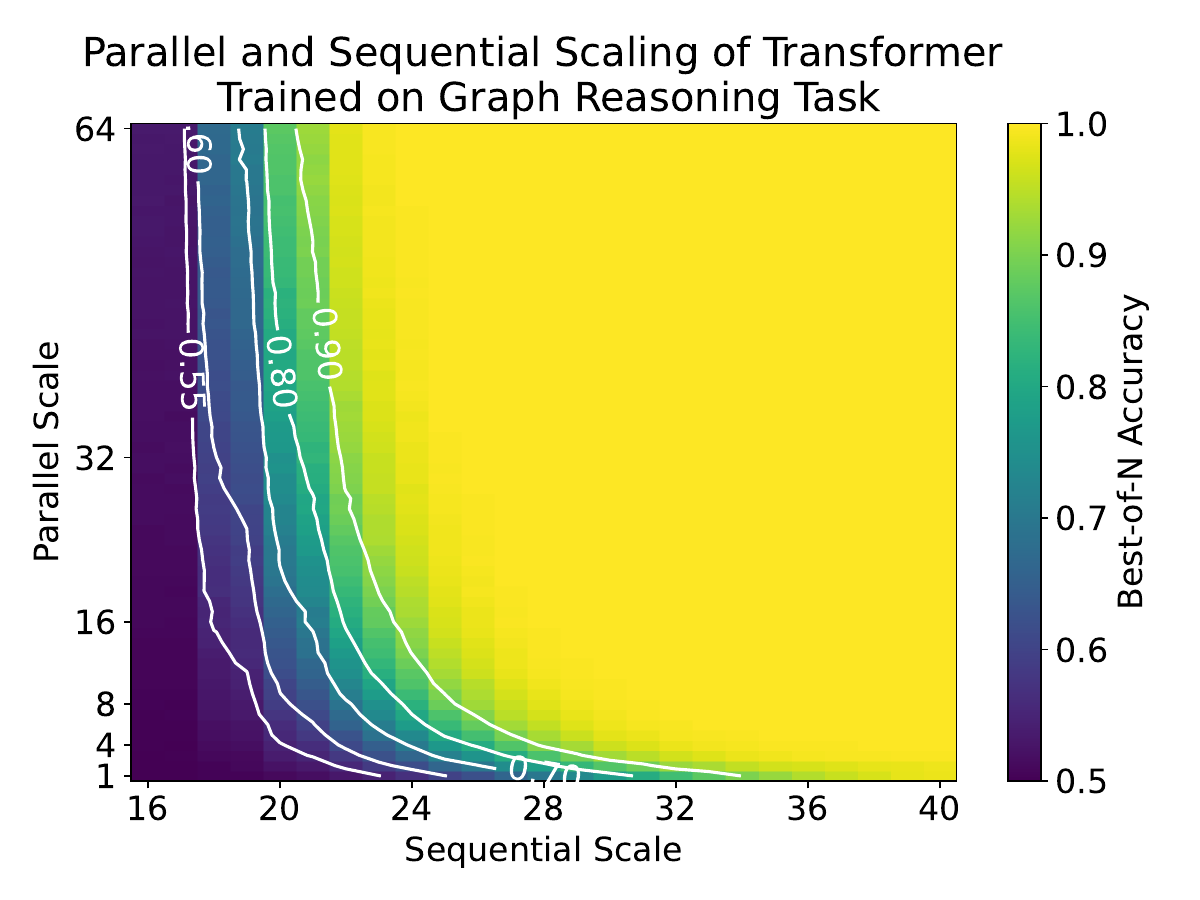}
  \end{minipage}
  \hfill
  \begin{minipage}[t]{0.49\linewidth}
    \includegraphics[width=\linewidth]{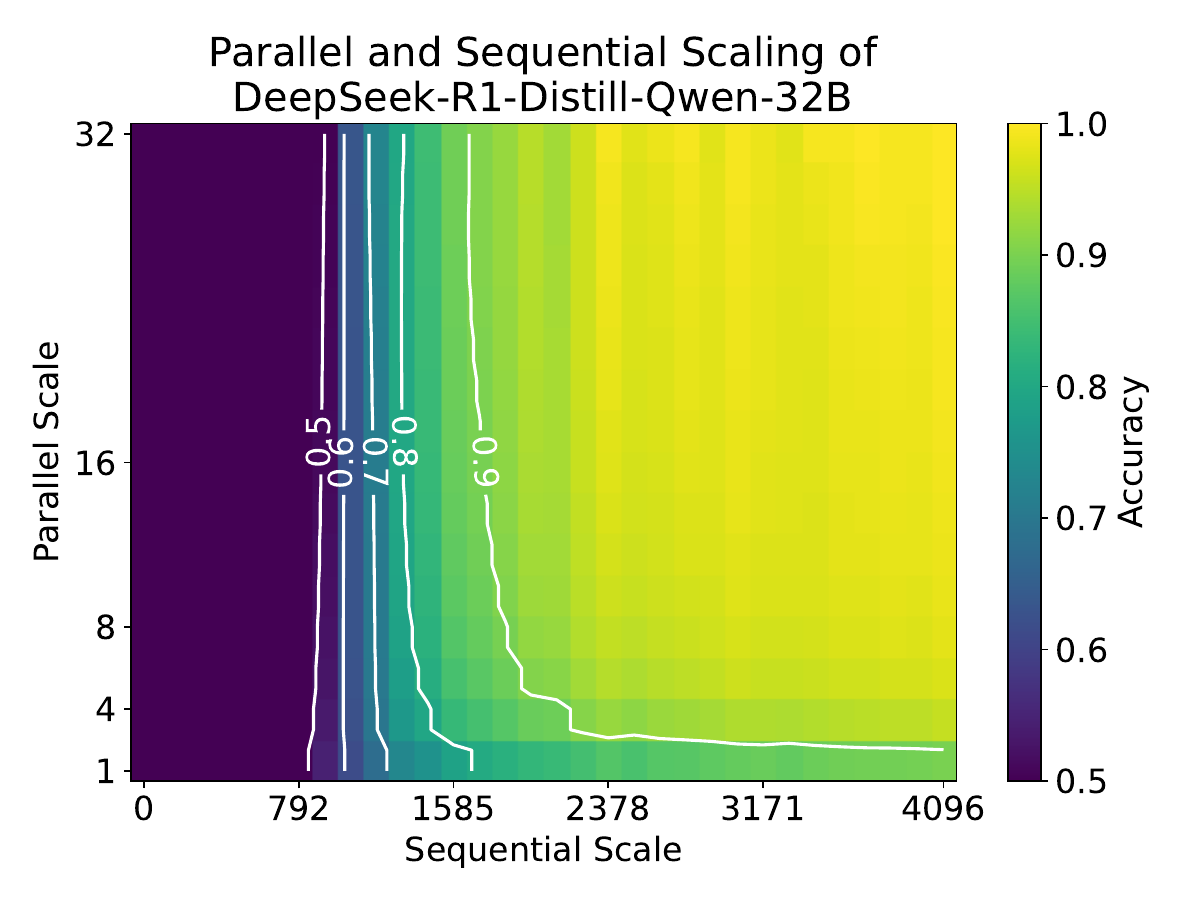}
  \end{minipage}  
  \caption{Model accuracy with different combinations of parallel and sequential scaling on a graph reasoning task. The sequential scale is the length budget for the chain of thought, and the parallel scale is the number of independent chains of thought generated.  The left figure reports the performance of a small transformer model trained for this task, where aggregation is with best-of-$n$. The right figure shows the performance of the frontier DeepSeek-R1-Distill-Qwen-32B reasoning model \cite{r1}, where aggregation is with majority vote. In both cases, there is a regime in which large increases to parallel scaling are required to compensate for a small decrease in sequential scaling; more details in \cref{sec:experiments}.} 
  \label{fig:leading}
\end{figure}

\paragraph{Reasoning task} Our reasoning task is a variant of the basic graph connectivity task. Solving it requires determining whether pairs of vertices are connected by stepping through several edges, and thus it serves as a proxy task modeling multi-step reasoning with CoT on more naturalistic data. Details are in \cref{sec:setup}.

Our task is motivated by a growing theoretical literature on the limitations and capabilities of transformers on graph reasoning tasks 
\citep{xu2020neuralnetworksreasonabout,sanford2024understanding,abbe2024how,besta2024graph, kim2025metastable,sanford2024transformers}, which have found that graph connectivity is challenging for bounded-depth transformers. The reason for this is that graph computation appears to be a sequential problem, yet the transformers' sequential computation is bounded by their depth, as proved in expressivity results \citep{merrill2024expressive,merrill-sabharwal-2023-parallelism,chiang2024transformers}.

\paragraph{Theoretical separations between sequential scaling and parallel scaling} We consider bounded-depth, bounded-precision transformers on the connectivity problem, and we present two theoretical results that crystallize the intuition that graph connectivity requires sequential scaling which cannot be  cost-effectively compensated by parallel scaling.

First, we prove that  (a) sequentially scaling with one 
polynomial-length CoT can solve the connectivity problem, but in contrast (b) parallel scaling by aggregating over polynomially-many $O(1)$-length Chains of Thought cannot succeed. The proof of this theorem leverages recent results on the expressivity of transformers ~\citep{merrill2024expressive,chiang2024transformers}, and requires making complexity-theoretic assumptions; see Section~\ref{sec:expressivity}.

Second, in order to obtain a more fine-grained picture of the landscape and understand the performance of chains of thought with greater than constant length, we abstract transformer computation on graph reasoning tasks with a ``Vertex Query Model'' of computation. This model of computation is inspired by known limitations of transformers for multi-hop reasoning \cite{sanford2024transformers}. The Vertex Query Model has the benefit that it is tractable to analyze. Thus, we use it to guide the construction of challenging distributions of ``two-path'' graphs and ``bridge'' graphs, for which we give evidence that there is an exponential gap between the performance of sequential and parallel scaling; see \cref{sec:vertex-query}.

\paragraph{Experimental validation and exploration} In \cref{sec:experiments}, we empirically validate the challenging distributions of ``two-path'' and ``bridge'' graphs motivated by our Vertex Query Model. We use these tasks to test transformer-based language models trained to solve graph connectivity, and find that there is a significant advantage to scaling sequential computation over scaling parallel computation. We then extend this empirical investigation to leading open-source reasoning models, evaluating their performance on the graph connectivity task as well as on the more complex AIME2024~\cite{MAA_AIME2024} dataset. The results reveal a consistent trend favoring sequential scaling over parallel scaling.

In \cref{sec:rl}, we explore training transformers on the graph connectivity problem with reinforcement learning (RL). We observe the emergent behavior that RL training gradually increases the length of the CoT. This behavior mirrors the growth in CoT length that occurs when training DeepSeek-R1 \cite{r1} with RL, and supports that the graph connectivity problems studied in this work are a rich enough task to capture many interesting behaviors observed in practice.

\paragraph{Related work} We discuss further related work, beyond that covered in this section, in Appendix~\ref{app:further-related}.

\section{Graph reasoning tasks}
\label{sec:setup}
Motivated by recent work on the expressivity and limitations of constant-depth transformers \cite{abbe2024how,merrill2024expressive,sanford2024transformers,xu2020neuralnetworksreasonabout,besta2024graph, kim2025metastable}, we test models on a graph connectivity task that serves as a basic testbed for reasoning. The most canonical connectivity task that one could consider is $(s,t)$-connectivity, defined below.
\begin{definition}[$(s,t)$-connectivity problem]\label{def:st-conn}
    The $(s,t)$-connectivity problem is: given a graph $G$ and vertices $s,t$ in this graph, return whether $s$ and $t$ are connected.
\end{definition}
One drawback of this task is that it is asymmetric -- in the case that $s$ and $t$ are connected, there is a path certifying that they are connected. On the other hand, when $s$ and $t$ are in distinct components, there is no such path certificate. In order to ease our theoretical analysis and the experiments, we instead consider a more symmetrical problem that we call $(s,t_1,t_2)$-connectivity, which captures the essence of the difficulty in graph connectivity. We define this problem below.
\begin{definition}[$(s,t_1,t_2)$-connectivity problem]\label{def:st12-conn}
    The $(s,t_1,t_2)$-connectivity problem is: given a graph $G$ and vertices $s,t_1$, and $t_2$ in this graph, return whether $s$ is connected to $t_1$ or $s$ is connected to $t_2$, given the promise that exactly one of these two alternatives is true.
\end{definition}

The benefit of this formulation of the problem is that in all cases, there is a path certifying the correct solution. For example, in the case that $s$ is connected to $t_1$, then the model can easily verify this in its chain of thought by finding a short path connecting $s$ to $t_1$.

Our theoretical results and our experiments are for $(s,t_1,t_2)$-connectivity in the setting where $G$ consists of two identical, disconnected components, one of the components contains $s$ and $t_i$, and the other component contains $t_{3-i}$. The task is inputted as a list of edges, followed by the IDs of $s$, $t_1$, and $t_2$. See \cref{fig:twopath} for an example of the input format.

\section{Theoretical evidence for benefits of sequential over parallel scaling}
\label{sec:theory}

We provide two main pieces of theoretical evidence for the benefits of sequential scaling over parallel scaling on these graph reasoning problems. We first prove a result based on expressivity limitations of bounded-depth transformers. Next, we obtain a more fine-grained picture based on an abstraction for CoT on graph reasoning problems that we call the vertex query model of computation. 

\subsection{Separation based on transformer expressivity limitations}\label{sec:expressivity}

We consider the  $(s,t_1,t_2)$-connectivity problem on undirected graphs, as defined in Definition~\ref{def:st-conn}, where the size of the problem is given by the number of nodes $n$ in the graph. We study the most extreme case of parallel versus sequential scaling: many chains of constant length, compared to one long chain of polynomial length.

We leverage recent results on the expressive power of transformers with chain-of-thought to prove the following theorem. It requires making the complexity theory assumption that $\TCzero \not\supseteq \mathsf{L}$, which is explained in Appendix~\ref{app:expressivity-separation}.

\begin{theorem}[Informal statement of \cref{thm:expressivity-separation}]\label{thm:inf-expressivity-separation}
Assume the complexity-theoretic statement that $\TCzero \not\supseteq \mathsf{L}$. Then the following is true for bounded-depth, limited-precision transformers.
\begin{itemize}
\item \textbf{Sequential scaling succeeds}: There is a constant $c > 0$ such that a transformer with a CoT of length $\leq n^c$ solves any $(s,t_1,t_2)$-connectivity problem.
\item \textbf{Parallel scaling fails}: For any constants $C_1,C_2 > 0$, and any transformer architecture, majority vote over $\leq n^{C_1}$ independently-sampled CoTs of length $\leq C_2$ has accuracy $\leq \frac{1}{2} + o(1)$ for $(s,t_1,t_2)$-connectivity problems.
\end{itemize}
\end{theorem}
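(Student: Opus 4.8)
The plan is to prove the two bullets separately, since they rely on different ideas.

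\paragraph{Sequential scaling succeeds.} The plan is to exhibit an explicit chain-of-thought that performs a standard graph traversal from $s$. Since $(s,t_1,t_2)$-connectivity is in $\mathsf{L}$ (indeed $s$-$t$ connectivity on undirected graphs is in $\mathsf{L}$ by Reingold's theorem, but even the simpler observation that a BFS/DFS can be written down with a polynomial-length transcript suffices here), I would invoke the known expressivity result that a bounded-depth, bounded-precision transformer augmented with a polynomial-length CoT can simulate any polynomial-time (or at least any logspace) computation---this is precisely the CoT-expressivity direction of \cite{merrill2024expressive,chiang2024transformers} cited in the excerpt. Concretely, the CoT would maintain a frontier of reached vertices, at each step appending one newly reached neighbor and checking whether it equals $t_1$ or $t_2$, terminating with the appropriate answer once one of the targets is found. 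The traversal visits each of the $\le n$ vertices once and scans the edge list, so the transcript has length $\le n^c$ for some constant $c$. This direction is essentially a citation plus a routine encoding of traversal, so it is not the hard part.

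\paragraph{Parallel scaling fails.} The plan here is an expressivity-plus-indistinguishability argument. The key structural feature of the task (from \cref{sec:setup}) is that $G$ consists of \emph{two identical, disconnected components}, one containing $s$ and $t_i$ and the other containing $t_{3-i}$; the answer is determined solely by which component $s$ shares with which target. First I would argue that a single constant-length CoT cannot compute connectivity on its own: a bounded-depth, bounded-precision transformer computes a function in $\TCzero$, and appending only $C_2 = O(1)$ reasoning tokens keeps the whole map (input $\mapsto$ final answer) inside $\TCzero$ (a constant number of additional constant-depth passes composes to constant depth). If such a map computed $(s,t_1,t_2)$-connectivity with nontrivial advantage on worst-case instances, one could reduce undirected connectivity to it and place $\mathsf{L}$ inside $\TCzero$, contradicting the assumption $\TCzero \not\supseteq \mathsf{L}$. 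The subtlety is that the claim is probabilistic (the CoT is \emph{sampled}), so I would instead argue at the level of the induced randomized decision: the transformer with a random constant-length CoT induces a randomized $\TCzero$ circuit, and by a standard averaging/derandomization (fixing the best random seed, which is $O(1)$ bits of advice since the CoT has bounded length over a fixed alphabet) we again collapse $\mathsf{L}$ into $\TCzero$ unless the accuracy is $\tfrac12 + o(1)$.

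\paragraph{From one chain to a polynomial majority vote.} Given that any single constant-length chain has accuracy only $\tfrac12 + o(1)$ on the hard distribution, the final step is to rule out amplification by majority voting over $n^{C_1}$ independent chains. The cleanest route is to design (or invoke the existence of) a \emph{hard distribution} of two-component instances on which no fixed $\TCzero$ predictor beats chance by more than $o(1)$ \emph{in expectation}, so that each sampled chain is an almost-unbiased coin \emph{correlated identically with the wrong and the right answer}; concretely I want the two labelings ($s$ connected to $t_1$ vs.\ to $t_2$) to be information-theoretically near-indistinguishable to any constant-depth, constant-query computation. Under such a distribution the votes are conditionally i.i.d.\ with success probability $\tfrac12 + o(1)$, and a Chernoff/anti-concentration bound shows the majority of $n^{C_1}$ near-fair coins is still correct with probability only $\tfrac12 + o(1)$. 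The main obstacle I anticipate is making the indistinguishability \emph{distributional and uniform over the sampled seeds simultaneously}: I must ensure that the $o(1)$ bias bound holds for \emph{every} $\TCzero$ predictor (equivalently, every fixed seed of every transformer architecture), not just for a single predictor, so that no clever choice of seeds lets the votes collude. I expect this to be handled either by the hardness assumption applied to the full seed-indexed family of predictors (a polynomial number of seeds still yields a $\TCzero$ family) together with a union bound, or by building the hard distribution so that each component is itself a pseudorandom / symmetric gadget whose label is hidden from all low-complexity queries, which is exactly the role the carefully constructed two-path and bridge distributions play later in the paper.
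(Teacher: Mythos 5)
Your first bullet matches the paper's proof (cite the CoT-expressivity result of \cite{merrill2024expressive} and encode a polynomial-time traversal), and your observation that a constant number of autoregressive steps keeps the end-to-end map inside (randomized) $\TCzero$ is also the right starting point. The genuine gap is in your final step, ``from one chain to a polynomial majority vote.'' Your plan is to show each sampled chain is a near-fair coin with advantage $o(1)$ and then conclude via ``Chernoff/anti-concentration'' that the majority of $n^{C_1}$ such coins is still correct with probability $\tfrac12+o(1)$. This is backwards: if the votes are (conditionally) i.i.d.\ with per-vote advantage $\epsilon$, Chernoff says the majority is correct with probability $1-\exp(-\Omega(n^{C_1}\epsilon^2))$, so any advantage $\epsilon=\omega(n^{-C_1/2})$ is \emph{amplified} to near-certainty rather than washed out. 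To make your argument work you would need each chain's advantage to be $o(n^{-C_1/2})$, i.e.\ a quantitative \emph{average-case} indistinguishability statement over a hard distribution, and the assumption $\TCzero\not\supseteq\mathsf{L}$ is a worst-case separation that gives you nothing of the sort. Your fallback remark about ``the hardness assumption applied to the full seed-indexed family of predictors together with a union bound'' still bounds each predictor separately and so runs into the same amplification problem.

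The paper sidesteps this entirely by never reasoning about a single chain's bias: it treats the \emph{whole} parallel-scaling pipeline --- all $m(n)$ independently sampled constant-length chains \emph{plus} the final Majority gate --- as one randomized $\TCzero$ circuit (Majority is a gate, and each chain consumes its own block of random input bits). If that single circuit had worst-case advantage $1/n$ on every size-$n$ instance, it could be derandomized by majority-voting over polynomially many seeds, union-bounding over all inputs, and hardcoding a good seed sequence (Lemma~\ref{lem:derandomize-tc0}), yielding a deterministic $\TCzero$ circuit for $(s,t_1,t_2)$-connectivity; a further $\TCzero$ reduction from $(s,t)$-connectivity (Lemma~\ref{lem:reduce-st-to-st1t2}, which your sketch only gestures at) then gives $\mathsf{L}\subseteq\TCzero$, a contradiction. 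This is why the theorem's conclusion is stated as ``there exists a hard instance for infinitely many $n$'' rather than as a distributional claim, and why no hard distribution or anti-concentration argument is needed. One smaller inaccuracy: the randomness of a sampled constant-length chain is not ``$O(1)$ bits of advice'' --- the alphabet is polynomial-size and approximating the softmax sampling already requires polynomially many random bits per step --- which is exactly why the paper routes the derandomization through the majority-over-seeds construction rather than enumerating seeds.
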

The above result may be rephrased as follows: parallel scaling requires at least a super-polynomial number of chains of thought of length $O(1)$ in order to simulate the computation achievable by sequentially scaling one chain of thought with polynomial length.

\paragraph{Proof ingredients} In Appendix~\ref{app:expressivity-separation} we provide the formal statement of the theorem and the full proof of the theorem. For the positive result, the main ingredient is from \cite{merrill2024expressive}, which implies that transformers with polynomial length CoT can implement any polynomial-time algorithm, and therefore can implement breadth-first search which solves the connectivity problem. For the negative result,  the expressivity bounds of \cite{chiang2024transformers,merrill-sabharwal-2023-parallelism} imply that transformers with $O(1)$-length chain-of-thought fall into the class of circuits $\TCzero$. Our main insight is that aggregating multiple independently-sampled CoTs is also a $\TCzero$ circuit, and therefore is unable to solve $(s,t)$-connectivity under the complexity-theoretic assumption. Finally, we reduce from the $(s,t)$-connectivity problem to the $(s,t_1,t_2)$-connectivity 
problem with a $\TCzero$ reduction.

\subsection{Evidence for separation based on the vertex query model}\label{sec:vertex-query}
While the result in Theorem~\ref{thm:inf-expressivity-separation} is based on expressivity limitations of transformers, it is crude in the sense that (1) it does not provide a polynomial versus exponential separation, and (2) the parallel scaling limitations apply only to CoT of length $O(1)$. We now complement Theorem~\ref{thm:inf-expressivity-separation} with a more fine-grained lens on the tradeoff between sequential and parallel scale. In order to achieve this fine-grained result, we make a simplifying abstraction on the dynamics of Chain of Thought called the \textit{Vertex Query Model (VQM)}. This computational model is more amenable to analysis than studying the $\TCzero$ circuit class.



\begin{definition}[Vertex Query Model]
An algorithm for $(s,t_1,t_2)$-connectivity is implementable in the Vertex 
Query Model (VQM) if it takes as input $s_1,t_1,t_2$, and can only access the graph $G$ through ``neighborhood queries'' $N_G$, which given a vertex $v$ output the set $N_G(v) = \{u : \exists (v,u) \in E\}$. 

We also define the Restricted Vertex Query Model (RVQM), where the algorithm can only initially query $s$, and subsequently can only query vertices in the sets returned by previous queries.
\end{definition}

For the results in this section, we work under the simplifying abstraction that transformers with chain of thought are constrained to learning functions computable in the VQM with a cost at most proportional to the length of the chain of thought.

This simplifying abstraction is motivated by prior literature. First, constant-depth transformers are known to have limited range for multi-hop reasoning in graphs \cite{sanford2024transformers}. In chains of thought that output a sequence of nodes, it is reasonable to expect that the next node outputted by a transformer should lie only in a constant-depth neighborhood of the previous nodes or be a randomly-chosen node. These kinds of chains of thought correspond exactly to VQM algorithms. Second, the VQM is closely related to the previously proposed ``globality barrier'' for transformers learning to reason (Definition 3 of \cite{abbe2024how}). The ``globality barrier'' suggests that transformers with CoT can only efficiently learn functions such that each CoT step does a local computation --  depending on a constant number of entries in the previous chain of thought. The VQM corresponds to such algorithms where the local computations allowed are neighborhood queries.

While the above arguments for VQM capturing the power of chain of thought are only heuristic, we now show that it is a useful abstraction because it motivates challenging families of graphs for the $(s,t_1,t_2)$-connectivity task. In Theorem~\ref{thm:vqm-path-task} below, we show that algorithms in the VQM fail on problems where the graph is given by two disjoint paths, unless a large number of queries proportional to the length of the path is taken (corresponding under our assumption to a long chain of thought proportional to the length of the path).

  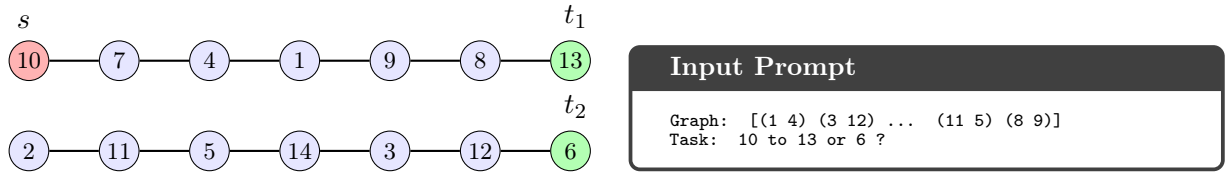
\begin{figure}
  \centering
  \begin{tabular}{@{}cc@{}}
\begin{tikzpicture}[
    vertex/.style={circle, draw, fill=blue!10, minimum size=15pt, inner sep=0pt}, 
    green_vertex/.style={vertex, fill=green!30},
    red_vertex/.style={vertex, fill=red!30},
    edge/.style={draw, thick, -},
    label_style/.style={above, yshift=1pt} 
]

\def\toplabels{{10,  7,  4,  1,  9,  8, 13}} 
\def\bottomlabels{{2, 11,  5, 14,  3, 12,  6}}

\foreach \i [count=\idx from 0] in {1,...,7} {
    \pgfmathsetmacro{\currentlabel}{\toplabels[\idx]}
    \ifnum\i=1 
        \node[red_vertex, label={[label_style, xshift=-2pt]$s$}] (T\i) at (\i*1.2, 0) {\footnotesize\currentlabel}; 
    \else
        \ifnum\i=7 
            \node[green_vertex, label={[label_style, xshift=2pt]$t_1$}] (T\i) at (\i*1.2, 0) {\footnotesize\currentlabel}; 
        \else
            \node[vertex] (T\i) at (\i*1.2, 0) {\footnotesize\currentlabel}; 
        \fi
    \fi
}

\foreach \i [count=\j from 2] in {1,...,6} {
    \path[edge] (T\i) -- (T\j);
}

\foreach \i [count=\idx from 0] in {1,...,7} {
    \pgfmathsetmacro{\currentlabel}{\bottomlabels[\idx]}
    \ifnum\i=7 
        \node[green_vertex, label={[label_style, xshift=2pt]$t_2$}] (B\i) at (\i*1.2, -1.2) {\footnotesize\currentlabel}; 
    \else
            \node[vertex] (B\i) at (\i*1.2, -1.2) {\footnotesize\currentlabel}; 
    \fi
}

\foreach \i [count=\j from 2] in {1,...,6} {
    \path[edge] (B\i) -- (B\j);
}

\end{tikzpicture}
& 
    \begin{tcolorbox}[title={\small \textbf{Input Prompt}}, width=0.48\linewidth, colback=white]
      \label{fig:task-prompt}
      \fontsize{7pt}{7pt}\selectfont
      \ttfamily
        Graph: [(1 4) (3 12) ... (11 5) (8 9)]\\
        Task: 10 to 13 or 6 ?
      \normalfont
    \end{tcolorbox}
    \end{tabular}
  \caption{Left: an example ``two-path'' graph task from Theorem~\ref{thm:vqm-path-task}. Right: the task input is a list of edges in randomized order and with randomly permuted vertex IDs.}\label{fig:twopath}
  \end{figure}

\begin{theorem}[Minimum number of VQM queries needed for graph connectivity]\label{thm:vqm-path-task}
Consider the graph $G$ given by two disjoint 
paths of length $L \geq 3$ with randomly permuted vertex IDs. Suppose $s,t_1,t_2$ are distinct endpoints of these paths such that $s$ and $t_i$ are on the same path for exactly one $i \in \{1,2\}$.
Then
\begin{itemize}
    \item \textbf{$O(L)$ queries are sufficient:} There is a VQM algorithm that executes $L-1$ queries and solves the $(s,t_1,t_2)$-connectivity problem with probability 1.
    \item \textbf{$\Omega(L)$ queries are needed:} For any VQM algorithm that executes $q \leq (L-2)/2$ queries, the probability of correctness of the algorithm on $(s,t_1,t_2)$-connectivity is exactly $1/2$.
\end{itemize}
\end{theorem}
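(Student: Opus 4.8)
The plan for the positive direction is to analyze the obvious ``walk from $s$'' algorithm and argue it is deterministic and exact. Since $s$ is an endpoint of its path, the query $N_G(s)$ returns a single neighbor $a_1$; inductively, having revealed a prefix $s=a_0,a_1,\dots,a_j$ of $s$'s path, I would query $a_j$ and take the unique returned neighbor other than $a_{j-1}$ as $a_{j+1}$. Because the vertex IDs are permuted but the neighborhood oracle is exact, this recovers $s$'s path vertex by vertex, and after $L-1$ queries the opposite endpoint appears; as that endpoint is $t_1$ or $t_2$, the algorithm outputs the matching index and is correct with probability $1$. This direction is routine, so the substance is the lower bound.

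\textbf{Reduction of the lower bound to a counting statement.} I would first reduce to deterministic algorithms: a randomized VQM algorithm is a distribution over deterministic ones, so exact accuracy $\tfrac12$ for every deterministic algorithm yields exactly $\tfrac12$ overall. Fix a deterministic $A$ making $q\le (L-2)/2$ queries; its output is then a function $g(\tau)$ of the query--response transcript $\tau$. Writing the accuracy as $\sum_\tau \Pr[\tau]\,\Pr[i=g(\tau)\mid \tau]$, where $i\in\{1,2\}$ is the (uniformly random) correct index, it suffices to prove $\Pr[i=1\mid\tau]=\tfrac12$ for every reachable $\tau$. Since the prior on $i$ is uniform and the two sub-populations of instances have equal size (they are exchanged by the measure-preserving relabeling $t_1\leftrightarrow t_2$), this reduces to the purely combinatorial claim that $\tau$ has equally many \emph{completions} with $i=1$ as with $i=2$, where a completion extends the revealed edges to two vertex-disjoint paths on $L$ vertices each with $s,t_1,t_2$ as endpoints.

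\textbf{Two structural facts.} The count rests on two observations about a transcript with $q\le(L-2)/2$ queries. First, the revealed edges are exactly those incident to the $\le q$ queried vertices, so a revealed fragment joining two of the anchors $s,t_1,t_2$ would have to expose \emph{every} edge along a full path between them; covering all $\Theta(L)$ edges of that path demands a vertex cover of size $\lceil (L-1)/2\rceil > q$, which is impossible. Hence $s,t_1,t_2$ always lie in three distinct revealed fragments. Second, the budget $q\le(L-2)/2$ is exactly what forces both answers to be feasible: the fragment containing $s$ together with the fragment containing either $t_j$ spans at most $q+2\le L$ vertices, so it fits on one length-$L$ path, and the remaining fragments fit on the other; thus both $i=1$ and $i=2$ admit completions. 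This is the precise point at which the threshold $(L-2)/2$ is used.

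\textbf{Counting completions, and the main obstacle.} Finally I would count completions. Once one fixes which fragments sit on which of the two paths and in what order and orientation, the never-revealed ``free'' vertices fill the remaining slots, and the number of such fillings factorizes into a multinomial that collapses to $|U|!$ times a factor depending only on the multiset of fragment sizes --- data that is invariant under exchanging $t_1\leftrightarrow t_2$. Summing over fragment placements then gives the same total for $i=1$ and for $i=2$, yielding $\Pr[i=1\mid\tau]=\tfrac12$ and hence accuracy exactly $\tfrac12$. I expect the main obstacle to be making this last count fully general: handling anchor-free fragments that may attach to either path, and the case in which the fourth (unlabeled) endpoint has been discovered and is therefore pinned to the non-$s$ path. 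I would discharge these by verifying that each such feature enters the count only through $t_1\leftrightarrow t_2$--symmetric quantities (as the representative computations bear out), or equivalently by exhibiting an explicit involution on instances that reroutes the continuations past the dangling ends of $s$'s path and rebalances the two path lengths using free vertices; proving that this involution is well defined, transcript-preserving, and measure-preserving is the crux of the argument.
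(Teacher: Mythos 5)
Your upper bound matches the paper's (walk along $s$'s path, one query per vertex) and is fine. For the lower bound, however, your reduction to ``every reachable transcript has equally many completions with $i=1$ as with $i=2$'' leaves the crux unproven, and you say so yourself: the claimed multinomial collapse is precisely the step that needs an argument, and it is not routine bookkeeping. When the fragment containing $t_1$ and the fragment containing $t_2$ have different sizes, the two hypotheses leave different amounts of free room on the two paths (the pair of residual slot-counts $\left(L-|F_s|-|F_i|,\ L-|F_{3-i}|\right)$ shifts by $\pm(|F_1|-|F_2|)$ when $i$ flips), so the number of ways to place the remaining anchor-free fragments is a priori split-dependent. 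The identity that this placement count depends only on the total number of residual slots does hold in the relevant regime, but it fails in general (e.g.\ with a single size-2 block and no free vertices, splits $(2,0)$ and $(1,1)$ give $2$ and $0$ placements), so one must additionally argue that the query budget guarantees enough free vertices and long enough segments for the polynomial identity to be valid combinatorially --- none of which you supply. You also need, and do not prove, that both answers are feasible for every reachable transcript once all fragments and the possibly-discovered fourth endpoint are accounted for (your span bound ``$q+2$'' for two fragments should be $2q+2$, which still gives $\le L$, but fitting the remaining fragments on the other path is not addressed).

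The paper's proof sidesteps all of this with a local swap. Index positions by (path, column); with $q\le(L-2)/2$ queries, at most $2q\le L-2$ of the $L-1$ adjacent column-pairs contain a queried position, so by pigeonhole some pair of columns $i,i+1$ is entirely unqueried on both paths. Exchanging the ID assignments of the two paths on all columns $\ge i+1$ is then a measure-preserving involution on instances that preserves the neighborhood of every queried vertex (hence the transcript) and swaps which of $t_1,t_2$ shares a path with $s$; conditioning on the transcript therefore gives success probability exactly $1/2$ directly. This single bijection is exactly the ``well defined, transcript-preserving, measure-preserving'' involution you identify as the missing crux; I would adopt it rather than trying to push the completion count through.
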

The proof is deferred to Appendix~\ref{proof:vertex query}. An example two-path graph is visualized in Figure~\ref{fig:twopath}. We experimentally validate in Figure~\ref{fig:path_llm_experiments} that, on frontier reasoning models, a minimal amount of sequential scale is needed to solve this problem, below which parallel scaling with majority vote is ineffective.

A drawback of the above theorem is that it proves limitations when the number of queries is smaller than the length of the shortest path between $s$ and $\{t_1,t_2\}$. In those situations, it may be impossible for algorithms in the VQM model to certify which $s$ and $t_i$ are connected. This raises the question: are there graphs where sequential scale is still beneficial even with more queries than the shortest path length? We provide one such example below, with the ``bridge graph'' construction. An example of this graph structure is illustrated in \cref{fig:flower-graph}. 
\begin{definition}[Bridge Graph]\label{def:bridge-graph}
    A bridge graph is an undirected graph parametrized by the non-negative integers \texttt{depth, short, long,} and \texttt{deadend}. It is constructed as follows:

    Let $v_1=s$ be the start node. Then, for each $i\in 1,\dots \texttt{depth}-1$,
    
    \begin{enumerate}[topsep=0pt, partopsep=0pt, itemsep=2pt, parsep=0pt]
        \item Let $v_{i+1}$ be the next "intersection"
        \item Add two paths between $v_i$ and $v_{i+1}$, one of length \texttt{short} and the other \texttt{long}
        \item Add a path from $v_i$ of length \texttt{deadend} (do not connect this to $v_{i+1}$)
    \end{enumerate}
\end{definition}
We now show that in the Restricted VQM, there is still a gap between sequential and parallel scale, even in a regime where more queries are made than the length of the shortest path in the bridge graph. Namely, with even a number of queries a constant fraction larger than the shortest path in this graph, any RVQM algorithm will be exponentially unlikely to succeed.

\begin{theorem}\label{thm:vertex query}
    Consider an algorithm in the Restricted Vertex Query Model solving $(s,t_1,t_2)$-connectivity on the union of two identical copies of the Bridge$(d, l, 2l,0)$ graph, where $s$ is the starting node on one side of the graph, and $t_1$ and $t_2$ correspond to the copies in the two connected components of the end node of the main path on the other side. For any $\delta\in (0,1)$,
    \begin{enumerate}
        \item \textbf{Sequential scaling succeeds}: There exists an algorithm which makes $(1+\delta)2ld$ queries and succeeds with probability at least $1-\exp\left(-\frac{1}{2}d\delta^2\right)$
        \item \textbf{Parallel scaling fails}: Any algorithm which makes no more than $(1-\delta)\frac{3}{2}ld$ queries succeeds with probability at most $\frac{1}{2}+\exp\left(-\frac{1}{2}\delta^2\frac{3}{2}d\right)$. Thus, parallel scaling with majority vote needs $\exp(\Omega(d))$ independent runs to succeed with probability $\geq 2/3$.
    \end{enumerate}
\end{theorem}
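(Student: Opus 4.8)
The plan is to reduce the problem to a simple random process driven by the hidden short/long assignments at the intersections of the bridge graph, and then apply Chernoff bounds in each direction. Label the intersections of the copy of $\mathrm{Bridge}(d,l,2l,0)$ containing $s$ as $v_1=s,v_2,\dots,v_d$, so there are $d-1$ consecutive \emph{segments}, each consisting of a short path (length $l$) and a long path (length $2l$) joining $v_i$ to $v_{i+1}$. Because the vertex IDs are permuted uniformly at random and the two forward paths leaving any $v_i$ are structurally identical (both are chains of degree-$2$ vertices) until one of them terminates at the next intersection, an RVQM algorithm obtains no information about which forward path is short until it has explored $\geq l$ edges along one of them. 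The restriction of the RVQM (queries confined to previously revealed vertices) is precisely what forces the algorithm to discover the component of $s$ by expanding outward from $s$, rather than probing $t_1,t_2$ directly; in particular the algorithm never reaches the other component.

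For the sequential (upper bound) direction I would analyze the natural commit strategy: upon reaching $v_i$, pick one forward path uniformly at random and follow it to $v_{i+1}$. The cost of crossing segment $i$ is then $X_i=l$ if the short path was guessed and $X_i=2l$ otherwise, each with probability $\tfrac12$ and independently across $i$. The total cost $\sum_{i=1}^{d-1}X_i$ is deterministically at most $2l(d-1)\le 2ld$, so the budget $(1+\delta)2ld$ always suffices to reach $v_d$ and read off which of $t_1,t_2$ it equals; a multiplicative Chernoff upper-tail bound on $\sum_i X_i$ (whose mean is $\tfrac32 ld$) then gives the stated $1-\exp(-\tfrac12 d\delta^2)$ form with room to spare.

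The lower bound is where the real work lies. First I would reduce correctness to reaching the far endpoint: by symmetry, any algorithm whose exploration has not yet revealed (a neighbor of) the far endpoint $v_d$ has a transcript whose distribution is unchanged under swapping the labels $t_1\leftrightarrow t_2$, so its success probability is exactly $\tfrac12$; hence beating chance requires crossing all $d-1$ segments. Let $\Delta_i$ be the number of queries spent crossing segment $i$. Two facts drive the bound: (i) $\Delta_i\geq l$ always, since reaching $v_{i+1}$ requires fully traversing at least one path; and (ii) $\Pr[\Delta_i<2l\mid\mathcal F_{i-1}]\leq\tfrac12$, because crossing in fewer than $2l$ queries forces the algorithm to complete the short path while exploring fewer than $l$ edges of the long one, i.e.\ to accumulate $l$ explored edges on the genuinely short path before doing so on the long path; since the two forward paths are statistically indistinguishable until $l$ edges have been explored, which of them first reaches $l$ explored edges is independent of the random labeling and hence is the short one with probability at most $\tfrac12$. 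Facts (i)--(ii) show that $\Delta_i/l$ conditionally stochastically dominates a uniform $\{1,2\}$-valued variable, so $\sum_i\Delta_i$ dominates $\sum_i X_i$ with the $X_i\in\{l,2l\}$ independent and uniform. The multiplicative Chernoff lower-tail bound on $\sum_i X_i$ (mean $\tfrac32 ld$) then yields $\Pr[\sum_i\Delta_i\leq(1-\delta)\tfrac32 ld]\leq\exp(-\tfrac12\delta^2\tfrac32 d)$, so an algorithm making at most $(1-\delta)\tfrac32 ld$ queries fails to cross all segments except with this probability and thus succeeds with probability at most $\tfrac12+\exp(-\tfrac12\delta^2\tfrac32 d)$. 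Finally, since each short run beats chance by only $\varepsilon=\exp(-\Omega(d))$, a standard majority-vote anti-concentration argument shows that $m=\Omega(\varepsilon^{-2})=\exp(\Omega(d))$ independent runs are needed to reach accuracy $\tfrac23$.

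The main obstacle I anticipate is making fact (ii) fully rigorous: I must argue, over the random permutation and against an arbitrary adaptive strategy that may interleave exploration of both forward paths, backtrack, or revisit vertices, that the labeling remains independent of the transcript until $l$ edges have been explored along some path, so that the ``race to $l$ explored edges'' is won by the short path with probability exactly $\tfrac12$. I expect to handle this by conditioning on the transcript up to the first visit of each intersection and showing each uncrossed intersection contributes a fresh fair coin, which licenses the stochastic-domination step and the Chernoff bound. Some care is also needed at the degree-$4$ intersections, where the algorithm cannot immediately distinguish the forward edges from the backward edge of the other incoming path; but this ambiguity can only increase the query cost, so it does not weaken the lower bound.
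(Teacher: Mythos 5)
Your lower-bound argument is essentially the paper's: reduce correctness to reaching the far endpoint $v_d$ via a label-swapping symmetry, observe that each segment costs at least $l$ queries and costs at least $2l$ with conditional probability at least $\tfrac12$ because the two forward paths are indistinguishable until $l$ edges of one have been explored, and then apply a Chernoff lower tail to the dominated sum of uniform $\{l,2l\}$ variables. You are in fact somewhat more explicit than the paper about the two points it glosses over (why failing to reach $t_b$ forces success probability exactly $\tfrac12$, and why adaptivity does not break the per-segment coin flip), and your majority-vote conclusion via the $\Omega(\varepsilon^{-2})$ anti-concentration argument is standard and fine.

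The upper bound, however, has a genuine gap. Your ``commit'' strategy --- at each intersection ``pick one forward path uniformly at random and follow it to $v_{i+1}$'' --- is not implementable in the RVQM: at a degree-$4$ intersection $v_i$ (every $i\ge 2$) there are \emph{three} unexplored path-stubs, namely short-forward, long-forward, and the other path back to $v_{i-1}$, and for $l\ge 2$ these are statistically indistinguishable until explored. You flag exactly this ambiguity at the end but claim it ``can only increase the query cost, so it does not weaken the lower bound''; that is true for the lower bound, but it is precisely the upper bound that it breaks. If the algorithm commits to the backward stub it wastes up to $2l$ queries before discovering it has returned to $v_{i-1}$, so the per-segment cost is not supported on $\{l,2l\}$: it can reach roughly $4l$, its conditional mean is $2l$ rather than $\tfrac32 l$, and the total is \emph{not} deterministically at most $2ld$ --- so your claim that the budget $(1+\delta)2ld$ ``always suffices'' is false, and the high-probability statement does not follow from the analysis you give. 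The paper's fix is an algorithmic idea you are missing: explore each candidate stub for only $l$ vertices and give up if no new intersection is found, which caps every failed attempt at $l$ queries, makes the per-intersection cost uniform on $\{l,2l,3l\}$ with mean $2l$ and maximum $3l$, and then a Hoeffding bound over the $d$ intersections gives failure probability $\exp(-2d\delta^2)\le\exp(-\tfrac12 d\delta^2)$. Your conclusion is recoverable (either with the paper's bounded-look-ahead algorithm, or by redoing the concentration analysis for the commit strategy with the correct range $[l,4l]$ and mean $2l$), but the argument as written does not establish part 1.
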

We emphasize that the shortest path between the $s$ and $t_i$ is of length $ld$, and the theorem proves that any algorithm in the RVQM has exponentially poor advantage over random guessing even for a number of queries a constant fraction larger than this shortest path.
The intuition is that each time the model hits an intersection (that is, a vertex with degree greater than two), it has to guess where to go next, and only has a constant probability per intersection of choosing the shortest path. The proof can be found in \cref{proof:vertex query}.

\section{Empirical evidence for benefits of sequential over parallel scaling}
\label{sec:experiments}

\begin{figure}[t]
  \centering
  \begin{minipage}[c]{0.46\textwidth}
    \includegraphics[width=\linewidth]{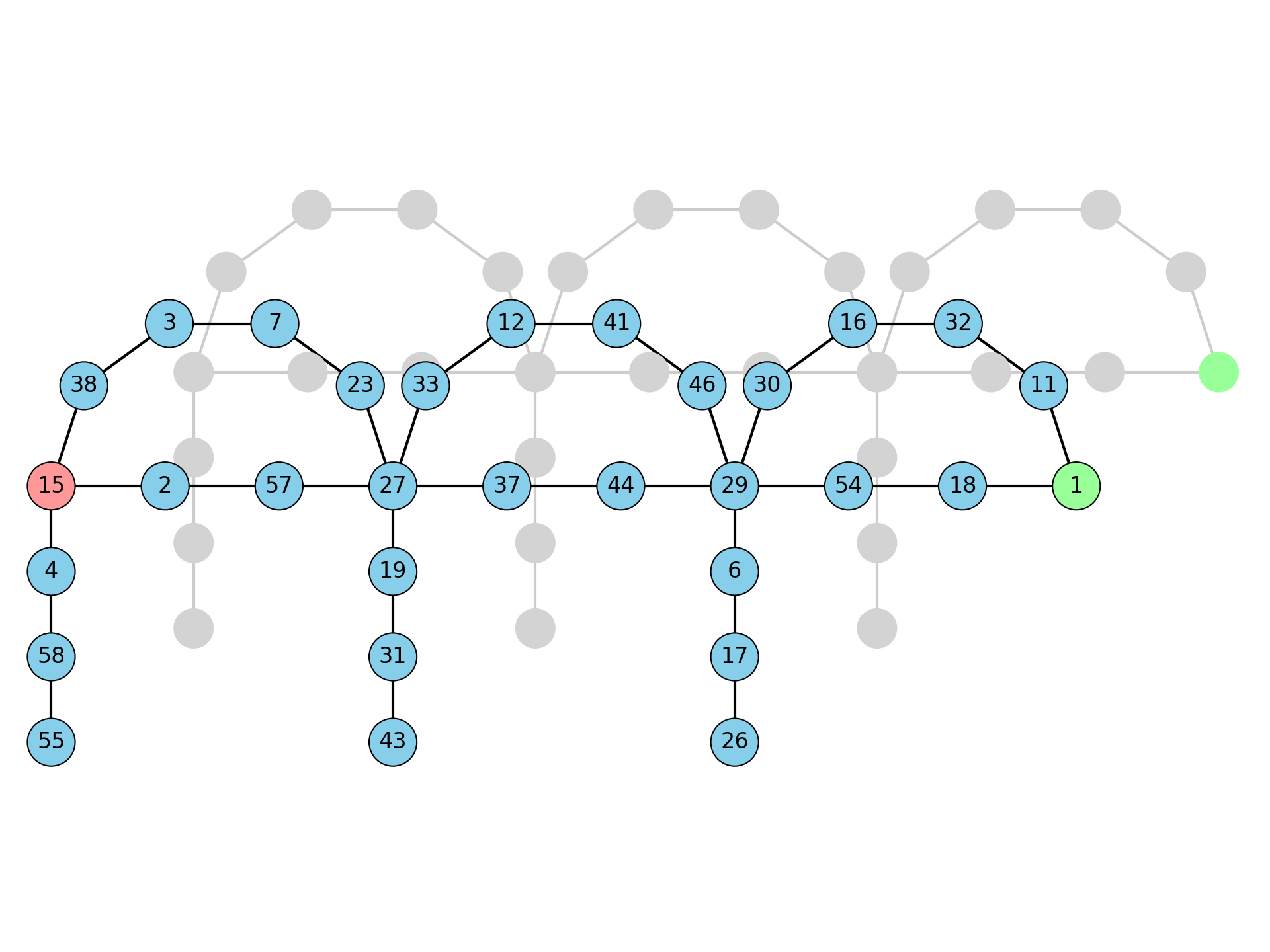}
  \end{minipage}
  \hfill
  \begin{minipage}[c]{0.53\textwidth}    
    \begin{tcolorbox}[title={\small \textbf{Input Prompt}}, width=\linewidth, colback=white]
      \label{fig:task-prompt}
      \fontsize{7pt}{7pt}\selectfont
      \ttfamily
        Graph: [(29 54) (15 2) ... (47 9) (32 16)]\\
        Task: 15 to 8 or 1 ?
      \normalfont
    \end{tcolorbox}
    \begin{tcolorbox}[title={\small \textbf{Examples of CoT Strategies}}, width=\linewidth, colback=white]
      \fontsize{7pt}{7pt}\selectfont
      \ttfamily
      DFS: [15 4 58 55 2 57 27 33 12 41 46 29 54 18 1]\\      
      Decision: [1]  
      \vspace{0.8ex}
      \hrule
      \vspace{0.8ex}
      Path: [15 2 57 27 33 12 41 46 29 54 18 1]\\
      Decision: [1]  
      \vspace{0.8ex}
      \hrule
      \vspace{0.8ex}
      Shortest-Path: [15 2 57 27 37 44 29 54 18 1]\\
      Decision: [1]  
      \normalfont
    \end{tcolorbox}
  \end{minipage}
  \caption{Left: Example ``Bridge'' graph task from Definition~\ref{def:bridge-graph}. Top right: the task input is a list of edges in randomized order and with randomly permuted vertex IDs. Bottom right: examples of chain of thought strategies used to train the model in our experiments in Section~\ref{sec:experiments}.}
  \label{fig:flower-graph}
\end{figure}


In this section, we experimentally study whether it is more efficient to parallelize multiple short CoTs or to scale one CoT sequentially. We validate the theoretical evidence put forward in Section~\ref{sec:theory} by (a) training transformer language models from scratch, and (b) evaluating leading open-source LLMs on the $(s,t_1,t_2)$-connectivity task with the ``bridge graphs'' of Theorem~\ref{thm:vertex query}. Further experiments on the ``two-path'' graphs of Theorem~\ref{thm:vqm-path-task} are available in Appendix~\ref{sec:exp-details}.

\subsection{Chain of thought strategies}
\label{sec:strategies}

In the transformers that we train from scratch, we seek to most efficiently use the chain-of-thought budget in order to best exhibit the full power of sequential scaling. In order to achieve this, we first train models on datasets generated by different CoT strategies, and then focus on the CoT strategy that has the best performance.

The CoT strategies that we consider provide a ``proof'' in the form of an exploration of the graph from the source to a sink, such as a path from the $s$ to either $t_1$ or $t_2$. This enables us to implement best-of-$n$ parallel scaling with a verifier for the proof. Even subject to the restriction of providing a proof, the CoT lengths can still be short enough that we find that models trained on them get only barely higher than trivial accuracy.

The strategy with the shortest CoT that we consider is \textbf{Shortest-Path}, where the training data consists of shortest paths from the source node to the target node.
Two other CoT strategies are derived from the trace of a depth-first-search (DFS) starting from the source node and ending at the target node. 
\textbf{Path} CoT is the path from the source node to the target node in the DFS tree, and 
\textbf{DFS} CoT is the list of DFS tree nodes ordered by when they are first visited in the DFS trace. The CoT and the final decision are appended to an input prompt to form a training example of the CoT strategy dataset (See \cref{fig:flower-graph}).



\subsection{Evaluation metrics}
\label{sec:metrics}
Given a task as an input prompt, a model trained with a CoT strategy autoregressively generates a sequence of tokens either by greedy decoding or by sampling with a temperature. 
We extract the CoT and the decision from the output and evaluate each separately using the following criteria:

\begin{enumerate}[partopsep=0pt, itemsep=2pt, parsep=0pt]
  \item Decision Criterion: This checks if the decision is equal to the reachable target node.
  \item Evidence Criterion: This verifies that the CoT starts with the source node, ends with one of the target nodes, 
  and for every node in the CoT other than the source node, at least one of its adjacent nodes appears earlier in the CoT.
\end{enumerate}

Building on these two criteria, we consider the following aggregation methods for evaluating parallel scaling: 
\begin{enumerate}[partopsep=0pt, itemsep=2pt, parsep=0pt]
  \item Majority Decision: This takes the majority over the decisions of the sampled outputs.
  \item Best-of-$n$: This checks if any of the sampled CoTs meets the evidence criterion, and if finds one, outputs its corresponding decision. Otherwise, it chooses one of the two target nodes at random as its decision.
\end{enumerate}

We also define decision accuracy and evidence accuracy based on the decision and evidence criteria respectively, and evaluate models using them, over a set of test tasks from the same distribution as the training tasks. For parallel scaling evaluation, we compute decision accuracy for majority decision and best-of-$n$ methods.

\subsection{Experiment setup}
\label{sec:ex-setup}
In our experiments, we let \texttt{short}=\(3\), \texttt{long}=\(5\), \texttt{deadend}=\(3\), and refer to \(\texttt{Bridge(d, 3, 5, 3)}\) as \(\texttt{Bridge(d)}\). To construct a task of depth \(d\), we generate two randomly labeled \texttt{Bridge(d)} graphs, select the first starting node of one of the graphs as the source node \(s\), and the last ending nodes of the two graphs as the target nodes \(t_1\) and \(t_2\). Finally, to transform the task into a sequence, we list the edges of the graphs in a random order, along with the labels of the source and target nodes as illustrated in \cref{fig:flower-graph}. For each CoT strategy and \texttt{Bridge(d)} task with depth from 1 to 5, we train a Mistral causal language model~\citep{jiang2023mistral7b, wolf2020transformers} with 4 hidden layers, 4 attention heads, and intermediate size 128 with a context length of 400 for 200 epochs.
In the experiments of each task, we use the same number of training tokens for all CoT strategies, equal to the number of training tokens in 500,000 samples from the Shortest-Path CoT strategy.

\subsection{Results}
\label{sec:exp-results}
We have found that models trained on DFS traces exploit a long CoT budget to outperform models trained on short CoTs (those generated by Shortest-Path and Path). The models trained on short CoTs tie with the DFS trained ones on very short budgets (at relatively low accuracy), but fall behind and plateau when given a higher token budget, as if they don't succeed early on, they don't know how to continue (which makes sense, since they have left their training distribution).
The DFS model achieves perfect evidence and decision accuracy on the tasks, while the Path and Shortest-Path models struggle with the tasks when increasing the graph's depth; achieving 11.16\% and 0.0\% evidence accuracies respectively on the \texttt{Bridge(5)} task (See \cref{fig:greedy-accuracies}). 

\begin{figure}[tbp]
  \centering
  \begin{minipage}[c]{0.49\textwidth}
    \includegraphics[width=\linewidth]{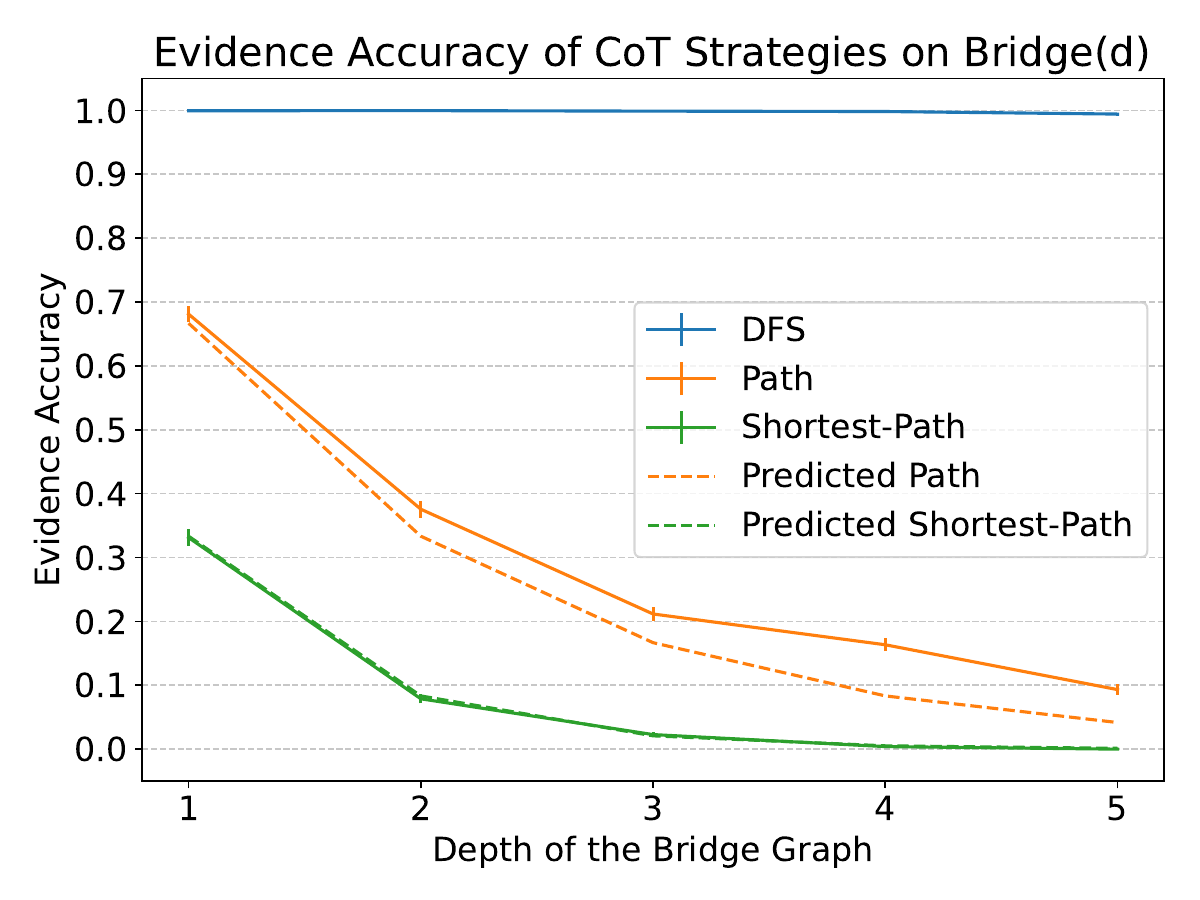}
  \end{minipage}
  \hfill
  \begin{minipage}[c]{0.49\textwidth}      
      \includegraphics[width=\linewidth]{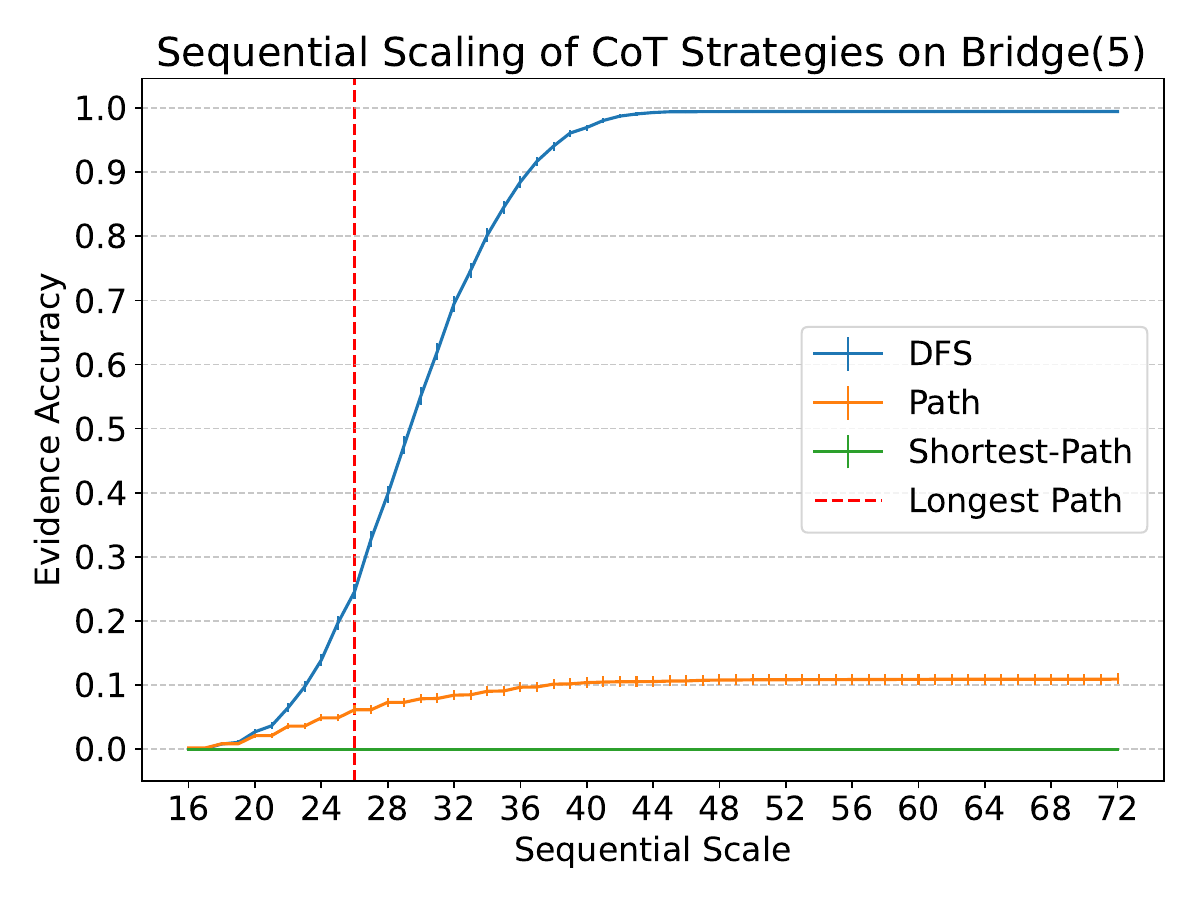}
  \end{minipage}
  \caption{(left) Evidence accuracy of CoT strategies on \texttt{Bridge} tasks of various depths compared to the probabilities of a DFS trace becoming the shortest path and a path respectively. 
  (right) Evidence accuracy of CoT strategies with different sequential CoT budgets on \texttt{Bridge(5)} task. Models outputs are sampled with greedy decoding. Error bars represent 95\% binomial confidence intervals. 
  }
  \label{fig:greedy-accuracies}
\end{figure} 

\paragraph{How models trained on short CoTs behave.} What scenarios at inference time lead to the failure of models trained on short CoTs? Looking more closely at the evidence accuracy of the models and their behaviour in response to an input, we find that although the Shortest-Path model is trained to take
the short path from the starting node of each component to its end node, it cannot distinguish between the unexplored
paths attached to the current node and gets into out-of-distribution scenarios by following the wrong paths and fails to recover from them. Therefore, given its limited CoT budget, its accuracy matches with the exponentially small probability \(P(\textrm{DFS} \in D(\textrm{Shortest-Path})) = \frac{1}{3 \times 4^{d-1}}\)
of a DFS trace that randomly chooses an unvisited adjacent node at each step, traversing the shortest path (See \cref{fig:greedy-accuracies}).
This supports the assumption that the model's limited expressivity limits its look-ahead ability, which motivated the Vertex Query Model (VQM) of Section~\ref{sec:vertex-query}. The Path model's accuracy follows a similar trend, but it is slightly higher than the in-distribution exploration probability \(P(\textrm{DFS} \in D(\textrm{Path})) = \frac{2^d}{3 \times 4^{d-1}}\).
The Path model's more flexible CoT budget and its ability to follow edges allows it to recover from some of the out-of-distribution scenarios it gets into by backtracking from the deadend or backward paths. 

\paragraph{Parallel scaling of models trained on short CoTs.} Since Figure~\ref{fig:greedy-accuracies} shows that sequential scaling of one chain of thought increases the accuracy significantly, we now ask: can we aggregate multiple short CoTs (either with best-of-$n$ or with majority voting) to achieve the same accuracy as one longer CoT? If so, how many short CoTs must we aggregate?
We experiment by generating many short CoTs with temperature 1.0 and measuring accuracy for both majority and best-of-$n$ aggregation methods. 
For one run, decision accuracy is usually higher than evidence accuracy, because the model can both rely on its CoT to make a decision and if the CoT is not a valid proof it can randomly guess between the two target nodes. This also makes decision accuracy less robust when the model's evidence accuracy is low (See \cref{fig:one-d-accuracy} for evidence of both behaviors in short CoT models).
However, when parallel scaling, the best-of-$n$ method that uses CoTs scales better than taking majority over the decisions (See \cref{fig:short-parallel}). Hence, we report the best-of-$n$ accuracy for the experiments with parallel scaling. We find that the best-of-$n$ accuracies of the Shortest-Path and Path models follow the probability that at least one of their \(n\) independent sampled CoTs succeed, with the success probability of each corresponding to the exponentially small probability of traversing the shortest path and a path respectively (See \cref{fig:short-parallel}). Therefore, we need to sample an exponential number of CoTs from these models to achieve an accuracy on par with a single CoT of our models trained on long CoTs.

\begin{figure}[tbp]  
  \includegraphics[width=\linewidth]{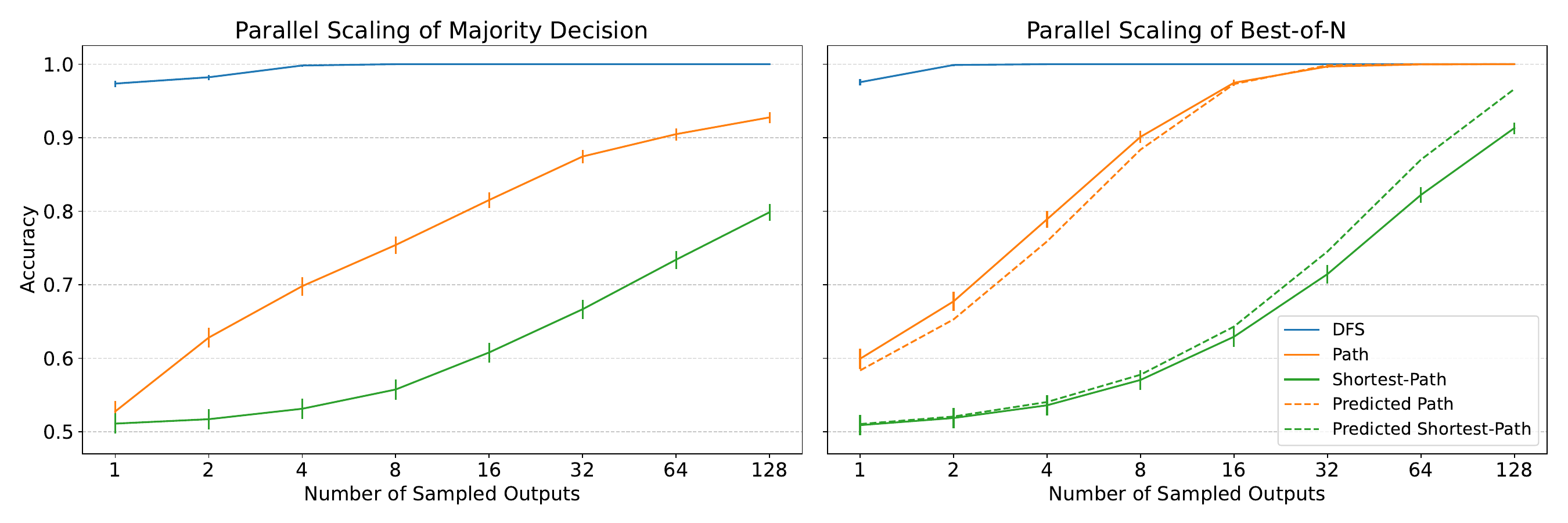}
  \caption{(left) Majority decision and (right) Best-of-$n$ accuracy for parallel scaling of models trained with CoT strategies for \texttt{Bridge(3)} task, compared to the accuracy predicted by each CoT independently meeting the evidence criteria with the probability of a DFS trace becoming the shortest path and a path respectively. 
  }
  \label{fig:short-parallel}
\end{figure}


\paragraph{Sequential scaling of CoT models.}
Inspired by the observation that our short CoT models behave like the search models but with limited sequential CoT budget, we examine the evidence accuracy of each model with different sequential scales of CoT budget. We budget-force~\citep{s1} the models by considering their CoTs of various maximum lengths, and find that at every sequential CoT budget, the DFS model achieves the highest evidence accuracy (See \cref{fig:greedy-accuracies}). 
Then, to examine the best accuracy we could get with parallel scaling our models within a fixed sequential budget, we parallel scale the DFS model of different sequential scales. We find that sequential scaling up to a certain threshold is more effective than exponential parallel scaling (See \cref{fig:leading}). Even from that threshold, scaling sequentially further is more efficient in terms of the number of tokens generated than parallel scaling (See \cref{fig:token-budget}). In other words, for a fixed total token budget, sequential scaling always beats parallel scaling. 



\subsection{Experiments with large language models}
We also measured the performance of various LLMs on the graph connectivity task, as well as on the AIME2024~\citep{MAA_AIME2024} benchmark.

Specifically, we consider the graph connectivity task on a bridge graph with \texttt{short=3, long=9, deadend=0, depth=2}. Mirroring our other experimental and theoretical results, LLMs only get trivial performance without a CoT, but when allowed a long CoT, they can achieve very high performance. 
We obtained similar trends with each of the three LLMs we tested, Qwen3-32B\cite{yang2025qwen3technicalreport}, DeepSeek R1 Distill Qwen-32B\cite{r1}, and S1-32B\cite{s1} (See \cref{fig:leading} and \cref{fig:llm results}). Note that it takes roughly a thousand tokens of sequential scaling to get non-trivial accuracy, many of these tokens are used up by the LLM describing what its general approach to the problem will be, before actually executing a strategy.

\begin{figure}
     \centering
     \begin{subfigure}[b]{0.48\textwidth}
         \centering
         \includegraphics[width=\textwidth]{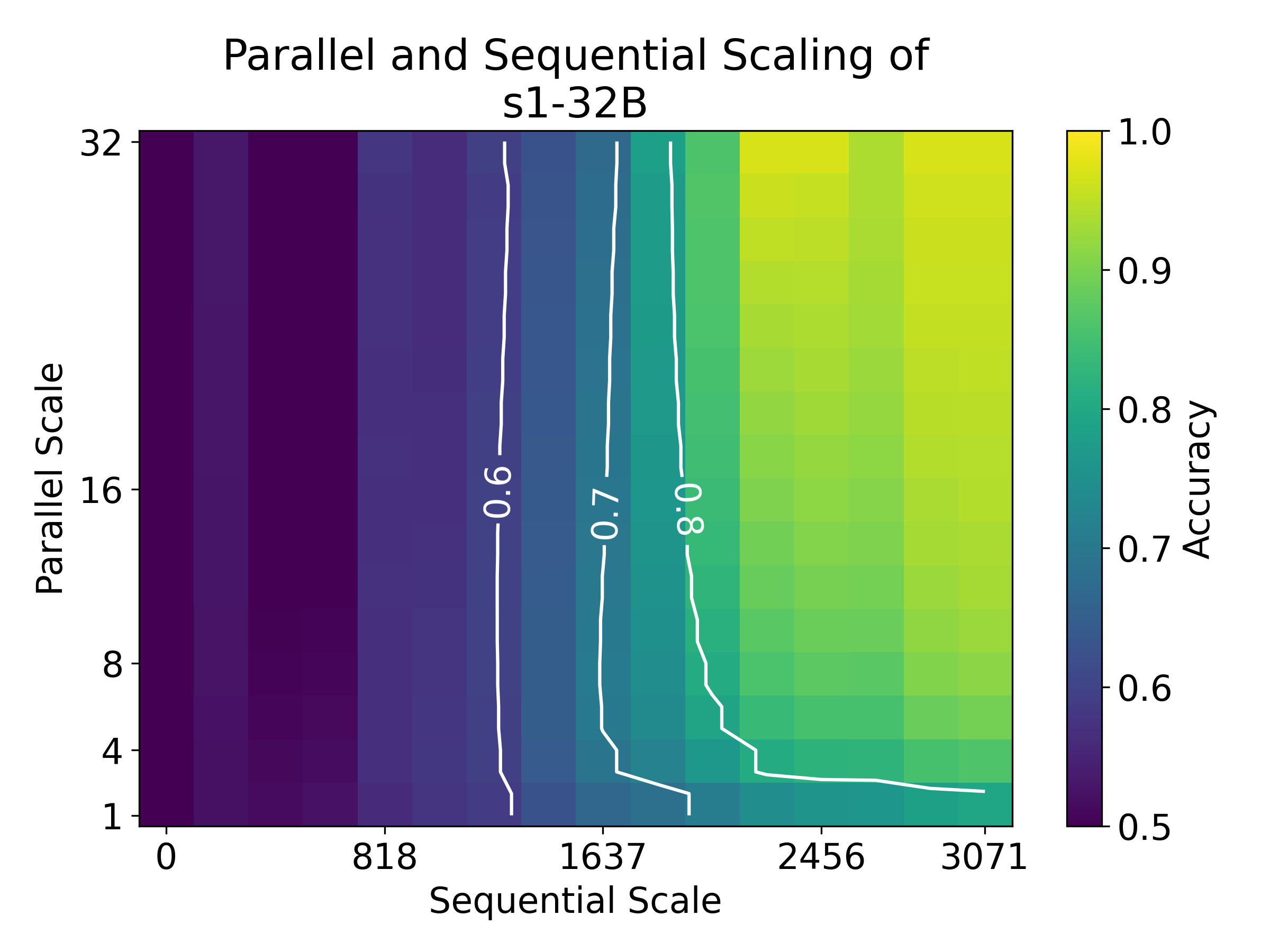}
         \label{fig:s1}
     \end{subfigure}
     \hfill
     \begin{subfigure}[b]{0.48\textwidth}
         \centering
         \includegraphics[width=\textwidth]{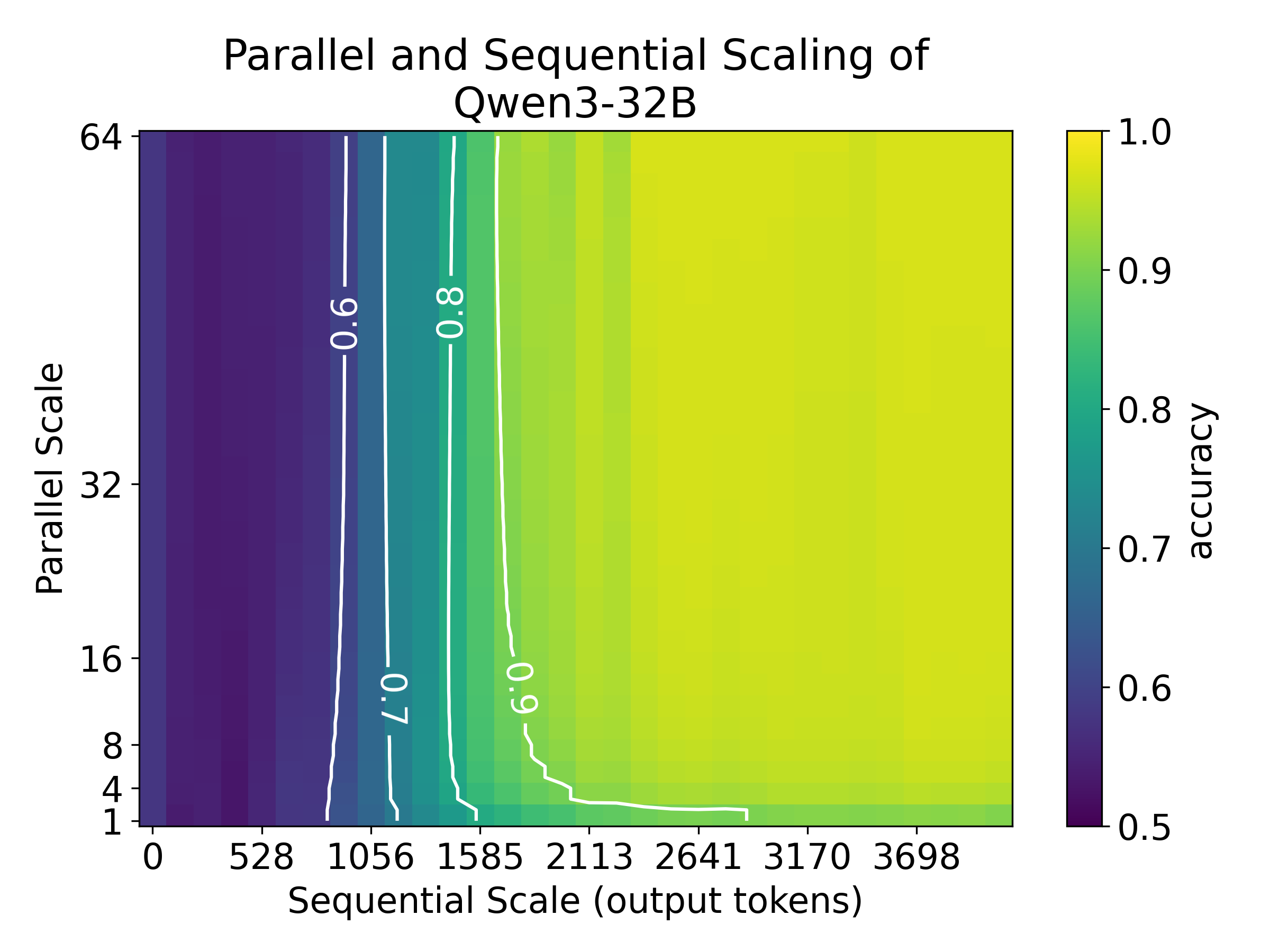}
         \label{fig:qwen3}
     \end{subfigure}
        \caption{A comparison of parallel and sequential scaling for s1-32B \cite{s1} and qwen3-32B \cite{yang2025qwen3technicalreport}. Note that the trend from \cref{fig:leading} is repeated. Sequential scaling is basically essential to get higher accuracy, and parallel scaling only becomes useful once sequential scaling has allowed for non-trivial performance.}
        \label{fig:llm results}
\end{figure}

We also conducted experiments with the s1‑32B model~\citep{s1} on the AIME2024~\citep{MAA_AIME2024} dataset. 
The results show that sequential scaling can not be efficiently replaced by parallel scaling for this mathematical task, supporting the generalizability of our findings to real-world scenarios (See~\cref{fig:aime2024-results}). While quantifying the exact trade-off between them for complex mathematical problems such as this is beyond the scope of our fundamental study, we observe that the results confirm our conclusion that sequential scaling is necessary.


\section{Emergent sequential scaling with reinforcement learning}

\begin{figure}[th]
  \centering
  \includegraphics[width=\linewidth]{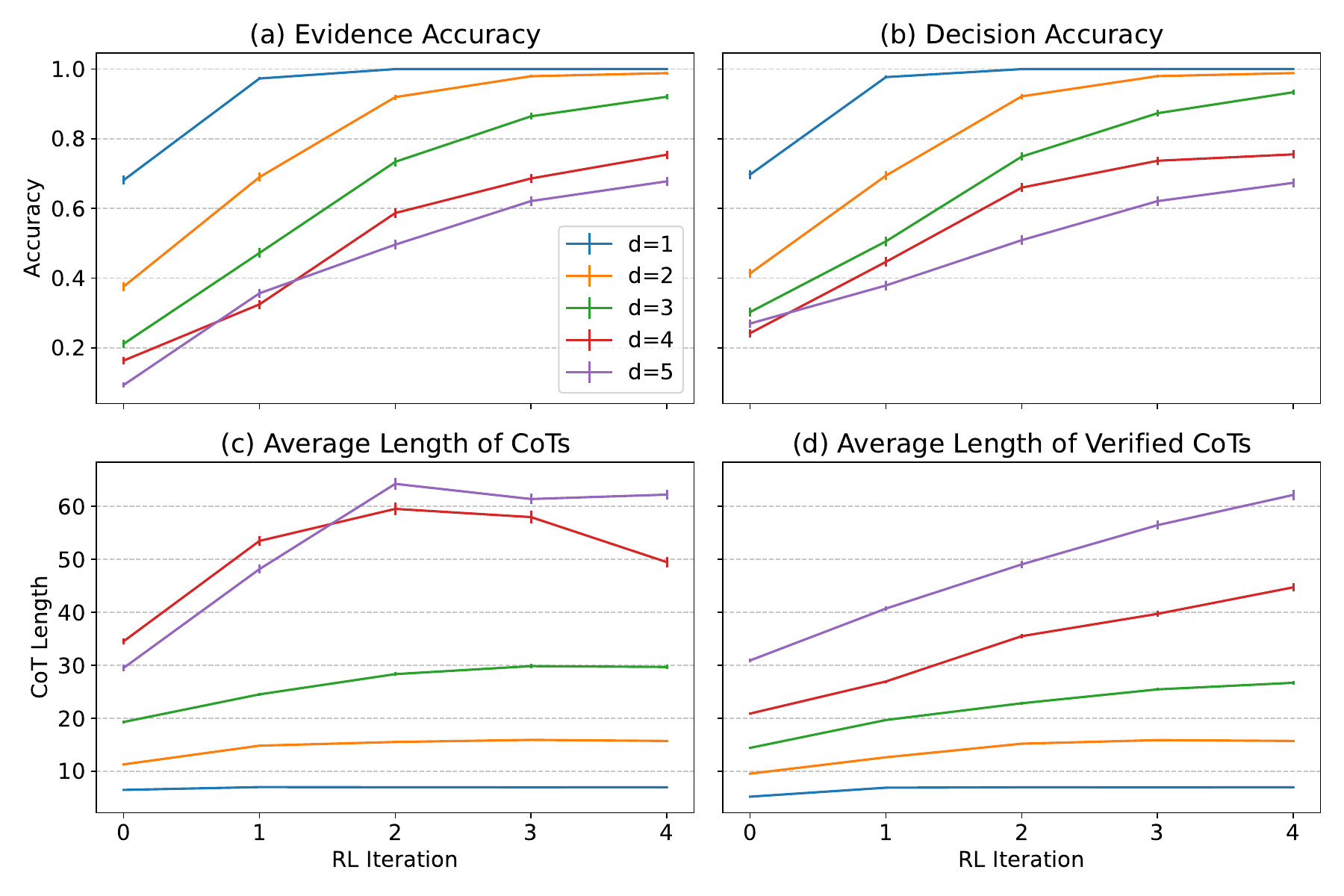}
  \caption{Path model's (a) Evidence accuracy, (b) Decision accuracy, and average length of (c) CoTs that follow the format, and (d) CoTs that are verified on \texttt{Bridge} tasks of various depths, before and after RL iterations. Error bars represent 95\% binomial confidence intervals for accuracies, and 95\% normal confidence intervals for CoT lengths.}
  \label{fig:rl-results}
\end{figure}

\label{sec:rl}
We observed that our models trained on short CoTs cannot look ahead to distinguish the correct and wrong paths and get into out-of-distribution scenarios. However, there are cases where the Path model recovers from the out-of-distribution scenario and succeeds in generating a verified CoT followed by the correct decision.
This results in verified CoTs that are longer than any CoT in the training data (See \cref{fig:greedy-accuracies}), with behaviors such as backtracking that are not present in the training data. How does reinforcing the model on its own verified CoTs, including these out-of-distribution CoTs, affect the model's performance and reasoning behavior? In this section, we explore this using Self‑Taught Reasoner (STaR)~\citep{zelikman2022star}, an expert iteration RL method, to fine-tune the model on its verified CoTs.

\paragraph{Experiment setup}
\label{sec:rl-exp-setup}
We perform a few iterations of STaR, where at each iteration, we sample responses to \(500,000\) examples of the \texttt{Bridge(d)} task from the Path model with temperature 1.0. Then we fine-tune the model for 20 more epochs on its own verified CoTs.

\paragraph{Results}
\label{sec:rl-exp-results}
We find that the model's accuracy on each task improves dramatically after a few iterations; evidence accuracy of the model pre-trained on \texttt{Bridge(3)} task jumping from 21.16\% to 92.02\% after 4 iterations. At the same time, the average length of the model's valid CoTs and verified CoTs increases and the model learns to exploit increasingly longer CoTs after each RL iteration (See \cref{fig:rl-results}). 
Moreover, we find that the model's accuracy improves at every sequential CoT budget (See \cref{fig:rl-seq}).
This gives insight into the observed phenomena of long CoT emergence during RL on reasoning tasks~\cite{r1}. RL can adapt the model's training to its expressivity for the task, by reinforcing its own computations that result in solving the task. In our case, the model is not expressive enough to solve the task by following the CoT strategy it was trained on. However, after training on its successful generations during RL, its CoT scales sequentially to follow a longer but more simple strategy it is expressive enough to adopt.

\section{Discussion}\label{sec:conclusion}

Our results on graph connectivity demonstrate that there are settings in which sequential scaling is vastly more cost-effective than parallel scaling. However, our results are limited only to the setting that we study, and the optimal recipe for test-time compute may be problem-dependent, lying in a mixture of combining both parallel scaling and sequential scaling. Indeed, our experiments indicate that once the sequential scale becomes large enough, parallel scaling becomes a more cost-effective axis to scale due to diminishing returns to sequential scaling. Understanding the general principles that determine the optimal mix of parallel and sequential scaling for a given dataset is an interesting direction for future study.

Additionally, while we make an effort to find the best models for graph connectivity with chain-of-thought (See \cref{fig:greedy-accuracies}) in our experiments, we do not have a guarantee that these are indeed the best models that deploy chain of thought. In future work, this could be addressed by studying models learned with RL, with a penalty on the length of the chain of thought, to encourage more optimal use of the sequential scaling budget.

Finally, the Vertex Query Model that we propose to abstract the power of chain-of-thought in \cref{sec:vertex-query} is motivated by the globality barrier studied in \cite{abbe2024how}, and is empirically validated, but it does not have direct theoretical backing. An interesting future direction is to prove that bounded-depth transformers on graph connectivity tasks are indeed effectively restricted by this model.


\section*{Acknowledgments}
This work was initiated while EB, EE, EM, and PM were visiting the Simons Institute for the Theory of Computing. EB was supported by the Simons Institute as a Research Fellow at the Special Year on Large Language Models and Transformers, and also by NSF grant CCF-2106377. EE acknowledges a gift from AWS AI to Penn Engineering's ASSET Center for Trustworthy AI. PM acknowledges support from the National Science Foundation (NSF), the Simons Foundation for the Collaboration on the Theoretical Foundations of Deep Learning, and the Office of Naval Research through awards DMS-2031883, \#814639, and ONR-N000142412631. SJ acknowledges funding support by the Center of
Mathematical Sciences and Applications. EM was supported by the Kempner Institute for the Study of Natural and
Artificial Intelligence, which was made possible in part by a gift from the Chan
Zuckerberg Initiative Foundation.
This work used the Delta system at the National Center for Supercomputing Applications through allocation TG-CIS220009 from the Advanced Cyberinfrastructure Coordination Ecosystem: Services \& Support (ACCESS) program, which is supported by National Science Foundation grants \#2138259, \#2138286, \#2138307, \#2137603, and \#2138296 \citep{boerner2023access}.

\newpage
\bibliographystyle{alpha}
\bibliography{references}

\newpage
\tableofcontents

\newpage
\appendix
\section{Separation between exponential and sequential scaling based on expressivity}\label{app:expressivity-separation}

We prove Theorem~\ref{thm:expressivity-separation}, which is the formal version of the Theorem~\ref{thm:inf-expressivity-separation} stated in the main text.

\subsection{Preliminaries: expressivity of transformers}

Before stating the theorem formally and proving it, let us first review known technical bounds on the expressivity of limited-precision, bounded-depth transformers \cite{strobl2023average,merrill2024logic,chiang2024transformers}. To present these, we must first define the computational model of threshold circuits.
\begin{definition}[$\TCzero$ computational model]
A $\TCzero$ circuit is a boolean circuit with AND, OR, NOT, and MAJORITY gates of potentially unbounded fan-in. A $\TCzero$ circuit family is a collection of circuits indexed by the input size $n$, such that for each input size the circuit has polynomial width and bounded depth.
\end{definition}
It has recently been shown that constant-depth transformers can be well-approximated by the class of threshold circuits of constant depth.
\begin{proposition}[Transformers are in $\TCzero$; implied by Theorem 14 of 
    \cite{chiang2024transformers}]\label{prop:transformersintczero}
For any bounded-depth softmax-attention transformer $T : \Sigma^* \to \R^{|\Sigma|}$ and any polynomial $p(n)$, there is a function $\hat{T} : \Sigma^* \to \R^{|\Sigma|}$ in $\TCzero$ that approximates $T$ to $2^{-p(n)}$ additive error on inputs of length $n$.\footnote{The $\TCzero$ circuit outputs in $\R^{|\Sigma|}$ is returned up to some number of bits of precision.}
\end{proposition}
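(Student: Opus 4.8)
The plan is to reduce to Theorem~14 of \cite{chiang2024transformers}, but I will sketch the underlying argument so the reduction is transparent. The transformer $T$ is a composition of a constant number $D = O(1)$ of layers, each consisting of multi-head softmax self-attention, a position-wise feedforward block, residual connections, and layer normalization. The strategy is to show that a single layer, acting on inputs represented in $p'(n)$-bit fixed precision, is computed up to small additive error by a constant-depth, polynomial-width threshold circuit, and then to compose these $D$ circuits while controlling the accumulated error. Because $D$ is constant, composing $O(1)$ constant-depth circuits keeps the total depth constant and the total width polynomial, so the result remains in $\TCzero$.

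First I would handle the arithmetic primitives used inside one layer. Each affine projection (queries, keys, values, and feedforward weights) is an inner product of $\poly(n)$-bit vectors, i.e. an iterated sum of products, which lies in $\TCzero$. The nontrivial primitives are the softmax and the layer normalization. For softmax I would (i) compute the raw attention scores as inner products; (ii) approximate each $\exp(\cdot)$ by a truncated power series, reducing it to iterated products and sums, which are in $\TCzero$ by the results of Hesse, Allender, and Barrington on iterated multiplication and division; (iii) sum the exponentials by iterated addition; and (iv) divide each by this normalizer, using that fixed-point division is in $\TCzero$. Layer normalization is handled analogously, approximating the mean, variance, and reciprocal square root to $p'(n)$ bits by the same iterated-arithmetic and Newton-iteration techniques. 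Since these operations are applied over $n$ positions with $\poly(n)$-bit numbers, each uses $\poly(n)$ gates and $O(1)$ depth.

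The remaining step is the error analysis. Here I would invoke the bounded-precision and bounded-norm assumptions on the transformer to argue that each layer map is Lipschitz with a bounded constant $K$ on the relevant domain; in particular the softmax normalizer is bounded away from zero, so the division step is numerically stable and does not amplify error. Propagating a per-operation additive error $\varepsilon$ through $D = O(1)$ layers inflates it by at most a factor $K^{D} = O(1)$, so choosing the internal precision $p'(n) = p(n) + O(1)$, i.e. $\varepsilon = 2^{-p'(n)}$, yields a total error $\le 2^{-p(n)}$, as claimed.

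The main obstacle is this error analysis coupled with the $\TCzero$-computability of the transcendental and division steps: one must verify that the softmax normalizer stays bounded away from zero so that division does not blow up the error, and one must rely on the nontrivial fact that iterated multiplication, division, and hence high-precision approximation of $\exp$ and of reciprocal square root all fall within $\TCzero$. Given those ingredients, the accumulation of error over a constant number of layers is routine, and the depth and width bounds follow immediately from composing $O(1)$ constant-depth, polynomial-width subcircuits.
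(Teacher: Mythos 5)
The paper gives no proof of this proposition: it is imported wholesale as a consequence of Theorem~14 of \cite{chiang2024transformers}, so there is nothing internal to compare against. Your sketch correctly reconstructs the standard argument behind that cited result (iterated addition, multiplication, and division in $\TCzero$ via Hesse--Allender--Barrington, truncated-series approximation of $\exp$ and the layer-norm reciprocal square root, and error propagation through a constant number of Lipschitz layers), and it matches the proof technique in the literature the paper relies on. The one point worth making explicit is that your appeal to ``bounded-precision and bounded-norm assumptions'' is not optional: the proposition as stated omits the precision restriction, but it is essential (an exact-real-arithmetic transformer is not known to be simulable in $\TCzero$), and the paper's surrounding text and the formal Theorem~\ref{thm:expressivity-separation} do impose it, so your proof is consistent with the intended reading.
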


This implies limitations on the expressive power of transformers, under standard computational complexity assumptions. In particular, it is a common conjecture that $\TCzero$ circuits are unable to determine $s$-$t$ connectivity in undirected graphs \cite{barrington2000lecture,williams2019some}, and this conjecture is normally stated as $\mathsf{L} \not\subseteq \TCzero$.\footnote{For directed graphs, which we will not use here, the relevant conjecture is $\mathsf{NL} \not\subseteq \TCzero$}, because $\mathsf{L}$ is a complete problem undirected graph connectivity \cite{papadimitriou2003computational,reingold2008undirected}. Therefore, Proposition~\ref{prop:transformersintczero} provides evidence that bounded-depth and poly-size transformers (without chain of thought) are not able to directly determine whether two nodes are connected in an inputted graph.

\subsection{Our result}
Proposition~\ref{prop:transformersintczero} has not been shown to imply a tradeoff between parallel and sequential scaling in transformers, which is the new contribution in Theorem~\ref{thm:expressivity-separation} proved in this section.

Given a function $T : \Sigma^* \to \R^{|\Sigma|}$ operating on a polynomial-size alphabet of tokens $\Sigma$, and an input prompt $x \in \Sigma^k$, we inductively define the autoregressive distribution $$D_{T,n}(x)$$ formed by sampling $n$ tokens autoregressively from the transformer. $D_{T,0}$ is the empty string with probability $1$. For any $n \geq 1$, the distribution 
$D_{T,n}$ is the distribution of $[z_1,\ldots,z_n]$ where $[z_1,\ldots,z_{n-1}] \sim D_{T,n-1}$, and $z_n \sim \mathrm{softmax}(T([x; z_1,\ldots,z_{n-1}]))$.

We first prove that the distribution of outputs from a transformer is close in total variation to one generated by iteratively applying a $\TCzero$ circuit.

\begin{lemma}[Approximating the autoregressive distribution of a transformer]\label{lem:autoregressive-tc0-approx-autoregressive-transformer}
Given a transformer $T : \Sigma^* \to \R^{|\Sigma|}$ and polynomials $p_1(n),p_2(n)$, there is a function $\hat{T}$ in $\TCzero$ such that for all $x\in \Sigma^n$
$$d_{TV}(D_{T,m}(x); D_{\hat{T},m}) \leq 2^{-p_1(n)}\,,$$ for any $m \leq p_2(n)$, where $d_{TV}$ denotes the total variation distance between distributions.
\end{lemma}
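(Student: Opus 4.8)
The plan is to combine the pointwise $\TCzero$-approximation of Proposition~\ref{prop:transformersintczero} with a telescoping (hybrid) argument across the $m$ autoregressive steps. First I would invoke Proposition~\ref{prop:transformersintczero} with a polynomial $p$ to be pinned down at the end, obtaining a single function $\hat T \in \TCzero$ that approximates $T$ to additive error $2^{-p(\ell)}$ in each coordinate on all inputs of length $\ell$. The prefixes fed to the transformer during generation are $[x; z_1,\ldots,z_{j-1}]$, whose lengths range over $n, n+1, \ldots, n+m-1 \leq n + p_2(n)$; these stay polynomial in $n$, so a single $\TCzero$ circuit family covers all of them, and on the first step (length $n$) the error is largest.

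Second, I would convert the coordinate-wise logit error into a per-step bound on the total variation distance between next-token distributions. If $\|T(w) - \hat T(w)\|_\infty \leq \epsilon$, then unwinding the softmax shows that each partition function changes by a factor in $[e^{-\epsilon}, e^{\epsilon}]$, so each output probability changes by a factor in $[e^{-2\epsilon}, e^{2\epsilon}]$; summing the coordinatewise discrepancies gives $d_{TV}(\mathrm{softmax}(T(w)), \mathrm{softmax}(\hat T(w))) \leq \tfrac{1}{2}(e^{2\epsilon}-1) \leq 2\epsilon$ for $\epsilon \leq \tfrac12$. Taking $w = [x; z_1,\ldots,z_{j-1}]$ of length $\ell = n+j-1$ yields a single-step TV bound of $2 \cdot 2^{-p(\ell)} \leq 2 \cdot 2^{-p(n)}$.

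The main step is the hybrid argument that upgrades per-step control to control of the joint distribution over all $m$ tokens. I would define hybrids $H_0, \ldots, H_m$, where $H_k$ generates tokens using $\hat T$ for the first $k$ steps and $T$ thereafter, so that $H_0 = D_{T,m}(x)$ and $H_m = D_{\hat T,m}(x)$. Consecutive hybrids $H_{k-1}$ and $H_k$ share the same law on the first $k-1$ tokens (both use $\hat T$) and apply the identical $T$-transition kernel after step $k$, differing only in whether token $k$ is drawn from $\mathrm{softmax}(T(\cdot))$ or $\mathrm{softmax}(\hat T(\cdot))$. Since total variation cannot increase under a common downstream Markov kernel, conditioning on the shared prefix and taking the expectation gives $d_{TV}(H_{k-1}, H_k) \leq \E\, d_{TV}(\mathrm{softmax}(T(\cdot)), \mathrm{softmax}(\hat T(\cdot))) \leq 2 \cdot 2^{-p(n)}$. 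The triangle inequality then yields $d_{TV}(D_{T,m}(x), D_{\hat T,m}(x)) \leq \sum_{k=1}^m d_{TV}(H_{k-1}, H_k) \leq 2m \cdot 2^{-p(n)} \leq 2\, p_2(n)\, 2^{-p(n)}$.

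Finally, I would choose the polynomial $p$ to absorb this accumulated factor: taking $p(n) = p_1(n) + p_2(n) + 2$ makes $2\, p_2(n)\, 2^{-p(n)} \leq 2^{-p_1(n)}$ simultaneously for every $m \leq p_2(n)$, which completes the argument. I expect the only delicate point to be the hybrid/coupling step, specifically the justification that the two processes share both the prefix law and the downstream kernel, so that the joint TV telescopes cleanly into a sum of single-step distances; the softmax Lipschitz estimate and the final choice of $p$ are routine bookkeeping.
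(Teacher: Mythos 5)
Your proposal is correct and takes essentially the same route as the paper: invoke Proposition~\ref{prop:transformersintczero} with a polynomial $p$ fixed at the end, bound the per-step total variation via a Lipschitz estimate on the softmax, and accumulate the error over the $m \leq p_2(n)$ steps (your explicit hybrid/telescoping argument is exactly the paper's induction combined with the data-processing inequality, unrolled). The only cosmetic differences are that your per-step softmax bound is dimension-free whereas the paper's carries a harmless $(|\Sigma|+1)$ factor, and the paper conditions the per-step expectation on prefixes from $D_{T,m-1}$ rather than from the $\hat T$-law; neither affects the argument.
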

\begin{proof}

Let $p(n)$ be a polynomial that we will fix later. Let $\hat{T}$ be a $\TCzero$ circuit family such that $\hat{T}$ approximates $T$ up to $2^{-p(n)}$ additive error on inputs of length $n$, as guaranteed by Proposition~\ref{prop:transformersintczero}.
For $m = 0$, we have
$d_{TV}(D_{T,0}(x), D_{\hat{T},0}(x)) = 0$ by definition. For any string $s$ of length $\geq n$, we have
\begin{align*}
d_{TV}(D_{T,1}(s), D_{\hat{T},1}(s)) &= \frac{1}{2} \sum_{i \in \Sigma} |\frac{\exp( 
T(s)_i)}{\sum_{j \in \Sigma} \exp( T(s)_j)} - \frac{\exp( 
\hat{T}(s)_i)}{\sum_{j \in \Sigma} \exp( \hat{T}(s)_j)}| \\
&\leq |\exp((|\Sigma|+1)2^{-p(n)}) - \exp(-(|\Sigma|+1)2^{-p(n)})| \\
&\leq 5(|\Sigma|+1)2^{-p(n)}\,,
\end{align*}
whenever $n$ is large enough and $2^{-p(n)}(|\Sigma| + 1) \leq 1$.
So combining with the data-processing inequality, for any $m \geq 1$, we have
\begin{align*}
d_{TV}(&D_{T,m}(x),D_{\hat{T},m}(x)) \\
&\leq d_{TV}(D_{T,m-1}(x),D_{\hat{T},m-1}(x)) + \E_{z \sim D_{T,m-1}(x)}[d_{TV}(D_{T,1}([x;z]), D_{\hat{T},1}([x;z]))] \\
&\leq d_{TV}(D_{T,m-1}(x),D_{\hat{T},m-1}(x)) + 5(|\Sigma|+1)2^{-p(n)}\,.
\end{align*}
Applying this inductively on $m$ yields
\begin{align*}
d_{TV}(D_{T,m}(x),D_{\hat{T},m}(x)) &\leq 5m(|\Sigma| + 1)2^{-p(n)} \\
&\leq 5p_2(n)(|\Sigma| + 1)2^{-p(n)}\,.
\end{align*}
Choosing $p(n)$ large enough so that the right-hand side is $\leq 2^{-p_1(n)}$ concludes the proof.
\end{proof}

This allows us to consider the autoregressive distributions generated by $\TCzero$ circuits, which we will find easier to analyze than the autoregressive distributions generated by transformers.

We observe that, for constant-length chains of thought, the autoregressive distribution is also directly sampleable by a $\TCzero$ circuit with no chain of thought. This lemma was effectively claimed  in Figure 1 of \cite{merrill2024expressive}, but without a proof.
\begin{lemma}[Constant-length CoT simulated by randomized $\TCzero$]\label{lem:tc0-approx-autoregressive-tc0}
Let $C$ be a constant number of steps, and let $\hat{T} : \Sigma^* \to \R^{|\Sigma|}$ be a function in $\TCzero$. Define the distribution of the last token $\hat{P}(x)$ to be the law of $z_C$ where $z \sim D_{\hat{T},C}(x)$.

Then for any polynomial $p_1(n)$, there is a polynomial $p_2(n)$ and a function $\tilde{T} : (\Sigma \cup \{0,1\})^* \to \Sigma$ in $\TCzero$ such that for all $x \in \Sigma^n$ we have
\begin{align*}
d_{TV}(\hat{P}(x); \tilde{P}(x)) \leq 2^{-p_1(n)}\,
\end{align*}
where $\tilde{P}(x)$ is the law of $\tilde{T}(x;r)$, where $r \sim \mathrm{Unif}[\{0,1\}^{p_2(n)}]$ are random input bits.

In other words, one step of $\tilde{T}$ approximates $C$ autoregressive steps of $\hat{T}$.
\end{lemma}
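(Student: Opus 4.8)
The plan is to directly simulate the $C$ autoregressive steps of $\hat{T}$ inside a single $\TCzero$ circuit, using the random bits $r$ to perform the sampling at each step. The crucial structural observation is that $C$ is a fixed constant, so a composition of constantly many $\TCzero$ subcircuits is again $\TCzero$; the work is therefore in (a) implementing one softmax-sampling step in $\TCzero$, and (b) accounting for the accumulated approximation error.

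First I would build a one-step sampling gadget. Given a string $s$ of length $\leq n + C = n + O(1)$ and a fresh block of random bits $r'$, I want a $\TCzero$ subcircuit $S(s; r')$ whose output token is distributed within total variation $\epsilon$ of $\mathrm{softmax}(\hat{T}(s))$. The construction has three parts: (i) evaluate the logits $\hat{T}(s)$ to high precision using the given $\TCzero$ circuit; (ii) convert the logits to normalized probabilities $p_i = \exp(\hat{T}(s)_i)/\sum_{j} \exp(\hat{T}(s)_j)$ to precision $2^{-q(n)}$ — this is exactly the softmax computation that the $\TCzero$ simulation of a transformer already performs in \cref{prop:transformersintczero}, so it is available in $\TCzero$; and (iii) sample by inverse transform, computing the prefix sums $P_{\leq i} = \sum_{j \leq i} p_j$ (iterated addition of polynomially many bounded-precision numbers is in $\TCzero$), interpreting $r'$ as a dyadic number in $[0,1)$, and outputting the least index $i$ with $r' < P_{\leq i}$ (a threshold comparison, also in $\TCzero$). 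The total variation error $\epsilon$ of this gadget is controlled by the precision of the logits, the precision of the softmax approximation, and the number of bits of $r'$; each contributes at most $2^{-q(n)}$ and can be driven below any desired inverse-polynomial bound.

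Next I would compose $C$ copies of the gadget. Partition $r \sim \mathrm{Unif}[\{0,1\}^{p_2(n)}]$ into $C$ independent blocks $r^{(1)},\dots,r^{(C)}$, set $z_1 = S(x; r^{(1)})$ and $z_\ell = S([x; z_1,\dots,z_{\ell-1}]; r^{(\ell)})$ for $\ell = 2,\dots,C$, and define $\tilde{T}(x; r) = z_C$. Because $C$ is constant and each $S$ has constant depth and polynomial size, the full circuit $\tilde{T}$ stays in $\TCzero$, and $p_2(n)$ is simply $C$ times the number of random bits one step consumes. For the error bound, I would reuse the telescoping argument of \cref{lem:autoregressive-tc0-approx-autoregressive-transformer}: conditioned on any prefix, the exact step $\mathrm{softmax}(\hat{T}(\cdot))$ and the approximate step $S(\cdot)$ differ by at most $\epsilon$ in total variation, so applying the triangle inequality together with the data-processing inequality across the $C$ steps gives $d_{TV}(\hat{P}(x); \tilde{P}(x)) \leq C\epsilon$. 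Choosing the per-step precision so that $\epsilon \leq 2^{-p_1(n)}/C$ — possible precisely because $C$ is constant — yields the claimed bound.

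The main obstacle is step (ii)–(iii): verifying that softmax sampling, namely the exponential, the normalization, and the inverse-CDF selection, can be carried out in $\TCzero$ to inverse-polynomial precision. The normalization, prefix sums, and comparison are standard $\TCzero$ operations, and the exponential to polynomially many bits is the same primitive already used in the transformer-to-$\TCzero$ simulation underlying \cref{prop:transformersintczero}; once these are granted, the composition over constantly many steps and the error accounting are routine.
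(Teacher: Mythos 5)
Your proposal is correct and follows essentially the same route as the paper's proof: build a one-step $\TCzero$ sampling gadget (evaluate $\hat{T}$, approximate the softmax as in the transformer-to-$\TCzero$ simulation, sample from the resulting distribution using fresh random bits), concatenate $C$ constant-many copies, and choose the per-step precision so the accumulated total variation error is below $2^{-p_1(n)}$. You spell out the inverse-CDF sampling and the telescoping error bound more explicitly than the paper does, but the argument is the same.
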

\begin{proof}
For any polynomial $p(n)$, there is a $\TCzero$ circuit that (given a polynomial number of random bits), samples from a step of the autoregressive distribution with $\hat{T}$ up to total variation error $2^{-p(n)}$. This is because first the circuit can compute $\hat{T}$, and then the softmax operation can be approximated by $\TCzero$ circuits, as proved in Theorem~14 of \cite{chiang2024transformers}. Concatenating this circuit $C$ times, we obtain a randomized $\TCzero$ circuit $\tilde{T}$ that satisfies the lemma, as long as we take $p(n) \geq p_1(n) \log_2(1/C)$.
\end{proof}

Now recall the folklore result that $\TCzero$ circuits can be derandomized.
\begin{lemma}[Derandomization of $\TCzero$; folklore]\label{lem:derandomize-tc0}
Let $p(n)$ and $p'(n)$ be polynomials and $\tilde{T} : (\Sigma \cup \{0,1\})^* \to \Sigma$ be a $\TCzero$ function.

Then, there is a $\TCzero$ function $\dot{T} : \Sigma^* \to \Sigma$ such that for any $n$, any $x \in \Sigma^n$ and $\sigma \in \Sigma$, we have
\begin{align*}
\dot{T}(x) = \sigma\,, \mbox{ if } \P_{r \sim \{0,1\}^{p(n)}}[\tilde{T}(x;r) = \sigma] \geq 1/2 + 1/p'(n)\,.
\end{align*}
\end{lemma}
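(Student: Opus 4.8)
The plan is to use the standard Adleman-style derandomization argument: first amplify the success probability by majority-voting over independently sampled random strings, and then fix a single good random string as nonuniform advice. Let $k = k(n)$ be a polynomial to be chosen, and consider the circuit that, given $x$ and $k$ independent strings $r_1,\dots,r_k \in \{0,1\}^{p(n)}$, computes $\tilde{T}(x;r_1),\dots,\tilde{T}(x;r_k)$ and outputs the plurality symbol among them (breaking ties, say, lexicographically). Fix an input $x \in \Sigma^n$ for which some symbol $\sigma^\ast$ satisfies $\P_r[\tilde{T}(x;r)=\sigma^\ast]\geq 1/2+1/p'(n)$; such a $\sigma^\ast$ is unique, since two symbols each of probability $\geq 1/2 + 1/p'(n)$ would have total probability exceeding $1$. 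The number $S$ of indices $i$ with $\tilde{T}(x;r_i)=\sigma^\ast$ is a sum of $k$ i.i.d.\ Bernoulli variables of mean $\geq 1/2+1/p'(n)$, so by Hoeffding's inequality $\P[S\leq k/2]\leq \exp(-2k/p'(n)^2)$. Whenever $S>k/2$ the symbol $\sigma^\ast$ is the strict majority, hence the plurality output.

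Next I would fix the randomness by a union bound. Since $|\Sigma|=\poly(n)$, there are at most $|\Sigma|^n = 2^{O(n\log n)}$ inputs of length $n$. Choosing $k=\Theta(p'(n)^2\, n\log|\Sigma|)=\poly(n)$ makes $\exp(-2k/p'(n)^2) < |\Sigma|^{-n}$, so for each single input $x$ possessing a correct $\sigma^\ast$ the amplified circuit errs with probability $< |\Sigma|^{-n}$ over $(r_1,\dots,r_k)$. A union bound over all $\leq |\Sigma|^n$ such inputs then shows that the probability (over the joint string $r^\ast=(r_1,\dots,r_k)$) that the amplified circuit errs on even one input of length $n$ is strictly below $1$. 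Hence there exists a fixed $r^\ast \in \{0,1\}^{k\,p(n)}$ that is simultaneously correct on all length-$n$ inputs, and I define $\dot{T}$ to be the amplified circuit with these bits hardwired.

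Finally I would verify $\dot{T}\in\TCzero$. The circuit is $k=\poly(n)$ parallel copies of the constant-depth, polynomial-size circuit $\tilde{T}$, followed by a plurality computation: for each $\sigma\in\Sigma$ one forms the count $\#\{i : \tilde{T}(x;r_i)=\sigma\}$ with a single layer of threshold (majority) gates, and then selects the symbol of largest count by a constant-depth comparison, which is in $\TCzero$. Hardwiring $r^\ast$ merely fixes input wires to constants and changes neither the depth nor the asymptotic size. Because the definition of a $\TCzero$ family already allows a distinct (nonuniform) circuit for each input length, the per-$n$ advice $r^\ast$ is free of charge, so $\dot{T}$ is a legitimate $\TCzero$ family with the claimed property.

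The main obstacle is bookkeeping rather than conceptual: one must drive the per-input error below $|\Sigma|^{-n}$ rather than merely $2^{-n}$, since the polynomial-size alphabet produces $2^{\Theta(n\log n)}$ inputs, and one must confirm that plurality over a polynomially large alphabet is genuinely computable in constant depth with threshold gates. Both hold with $k=\poly(n)$, so the construction goes through.
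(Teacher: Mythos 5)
Your proposal is correct and follows essentially the same route as the paper's proof: amplify by taking a (plurality) vote over polynomially many independent copies of $\tilde{T}$, apply a Chernoff/Hoeffding bound to drive the per-input error below $|\Sigma|^{-n}$, union bound over all length-$n$ inputs to find a single good seed, and hardwire it into the nonuniform circuit. Your added care about the polynomial-size alphabet (plurality rather than binary majority, and the $|\Sigma|^n$ rather than $2^n$ input count) is a detail the paper's proof treats more tersely but does not change the argument.
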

\begin{proof}
Let $p_1(n)$ be a polynomial that we will fix later. Consider the circuit $T'$ that upon input $[x;r_1,\ldots,r_{p_1(n)}]$ where $x \in \Sigma^n$ and $r_i \in \{0,1\}^{p(n)}$, takes a majority vote over $\tilde{T}(x;r_1),\ldots,\tilde{T}(x;r_{p_1(n)})$. By a Chernoff bound, and a large enough polynomial $p_1(n)$, we have that for any $x \in \Sigma^n$ and $r \in \{0,1\}^{p(n)}$, we have
\begin{align*}
\P_{r_1,\ldots,r_{p_1(n)}}[T'(x;r_1,\ldots,r_{p_1(n)}) = \sigma] \geq 1 - 
|\Sigma|^{-n-1} \mbox{ if } \P_{r \sim \{0,1\}^{p(n)}}[\tilde{T}(x;r) = \sigma] \geq 1/2 + 1/p'(n)\,.
\end{align*}
By a union bound over all inputs $x \in |\Sigma|^n$, for any $n$ there is a random seed $[r_1^*,\ldots,r_{p_1(n)}^*]$ such that 
\begin{align*}T'(x;r_1^*,\ldots,r_{p_1(n)}^*) = \sigma, \mbox{ if } \P_{r \sim \{0,1\}^{p(n)}}[\tilde{T}(x;r) = \sigma] \geq 1/2 + 1/p'(n)\,.
\end{align*}
For any $x \in \Sigma^n$, let $\dot{T}(x) = T'(x;r_1^*,\ldots,r_{p_1(n)}^*)$, which is in $\TCzero$ since the seed can be hardcoded into the circuit and is of polynomial length.
\end{proof}

The final ingredient is a $\TCzero$ reduction from $(s,t)$-connectivity to $(s,t_1,t_2)$-connectivity.
\begin{lemma}\label{lem:reduce-st-to-st1t2}
Suppose that the function $f(G,s,t_1,t_2)$ solving $(s,t_1,t_2)$-connectivity instances is in $\TCzero$. Then $\TCzero \supseteq \mathsf{L}$. 
\end{lemma}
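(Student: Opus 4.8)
The plan is to give a $\TCzero$ many-one reduction from the undirected $(s,t)$-connectivity problem (\cref{def:st-conn})---which is $\mathsf{L}$-complete for undirected connectivity, as recalled above---to $(s,t_1,t_2)$-connectivity (\cref{def:st12-conn}). Given such a reduction, composing it with the hypothesized $\TCzero$ function $f$ yields a $\TCzero$ circuit family deciding an $\mathsf{L}$-complete problem, whence $\TCzero \supseteq \mathsf{L}$. So the entire content is to construct, in $\TCzero$, from an arbitrary instance $(G,s,t)$ a new instance $(G',s',t_1',t_2')$ that (i) provably satisfies the promise, and (ii) whose answer reveals whether $s$ and $t$ are connected. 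Note that because $f$ is only guaranteed correct on promise-respecting inputs, requirement (i) is not optional: feeding $f$ an instance that violates the promise teaches us nothing.

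The hard part is exactly requirement (i). The promise demands $s'$ be connected to \emph{exactly one} of $t_1',t_2'$, i.e. $[s'\sim t_1']\oplus[s'\sim t_2']=1$; if we naively set $t_1'=t$, then satisfying the promise forces $[s'\sim t_2']=\neg[s\sim t]$, which is a \emph{certificate of non-connectivity}. Building such a gadget in $\TCzero$ would amount to a $\TCzero$ reduction from $\overline{\text{USTCON}}$ to $\text{USTCON}$, i.e. complementing undirected connectivity inside $\TCzero$; since connectivity is conjectured to lie outside $\TCzero$, no local graph gadget should accomplish this, and a direct reduction from \emph{worst-case} USTCON appears to hit precisely this barrier.

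I would sidestep this by reducing instead from a \emph{path-structured} $\mathsf{L}$-complete problem, for which the promise holds \emph{for free} by topology rather than by computation. Concretely, consider connectivity on a disjoint union of two simple paths, presented with $s$ an endpoint of one path and $t_1,t_2$ the far endpoints of the two paths (this is the family already appearing in \cref{thm:vqm-path-task}). Every such instance automatically satisfies the promise: the graph is exactly two vertex-disjoint paths and $s,t_1,t_2$ are distinct endpoints, so $s$ lies on precisely one of the two paths and is therefore connected to exactly one of $t_1,t_2$. These are legal $(s,t_1,t_2)$-inputs on which $f$ must be correct, so $f\in\TCzero$ solves this problem; it then remains only to show the problem is $\mathsf{L}$-hard under $\TCzero$ reductions.

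The remaining, genuinely technical step is this $\mathsf{L}$-hardness, which I would obtain by a $\TCzero$ reduction from a canonical pointer-chasing $\mathsf{L}$-complete problem (such as the order problem ORD, or permutation same-cycle), encoding the single successor/linked-list structure as two disjoint undirected paths so that the ordering/cycle query becomes ``does the chain from $s$ terminate at $t_1$ or at $t_2$.'' Here one must verify both that the encoding is computable by a constant-depth threshold circuit and that its output always lands in the promised two-path form; this is where the real care is needed, since the map must \emph{route} the pointer structure into two paths without itself performing any connectivity computation. Once this is established, chaining the reduction with $f$ gives a $\TCzero$ algorithm for an $\mathsf{L}$-complete problem, and hence $\TCzero\supseteq\mathsf{L}$, as claimed.
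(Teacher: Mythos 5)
Your proposal is a genuinely different route from the paper's, and your diagnosis of the central obstacle is exactly right: a naive many-one reduction from USTCON would need to manufacture a promise-respecting instance, which means certifying \emph{non}-connectivity inside $\TCzero$. The paper does \emph{not} sidestep this by changing the source problem; instead it keeps general USTCON (which is $\mathsf{L}$-complete) and embraces promise violation. It forms $H = G_1 \sqcup G_2$ from two copies of $G$, takes $u$ to be the copy of $s$ in a randomly chosen copy and $v_1,v_2$ the copies of $t$, and observes that (a) if $s\sim t$ the promise holds and $f$ must answer $v_i$, while (b) if $s\not\sim t$ the promise fails but, by the symmetry of the two randomly relabeled copies, $f$ outputs $v_i$ with probability exactly $1/2$; ANDing over polynomially many random permutations drives the error down, and the randomness is then fixed non-uniformly by a union bound. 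Your approach instead restricts to instances where the promise holds \emph{by topology} (two disjoint paths, as in \cref{thm:vqm-path-task}) and reduces from a pointer-chasing $\mathsf{L}$-complete problem such as ORD. This buys you a clean deterministic reduction with no amplification or hard-coded seeds, at the price of needing $\mathsf{L}$-hardness of the restricted two-path problem under $\TCzero$ reductions --- a step you correctly flag as the technical crux but do not carry out. It does go through (e.g., from ORD: delete the single successor edge leaving the query vertex $b$, which splits the line into two paths, and ask whether $a$ reaches the head or the tail, both of which are $\mathsf{AC}^0$-identifiable; degenerate cases need minor padding), so I regard your argument as correct in outline but with its key reduction asserted rather than verified, whereas the paper's argument is self-contained modulo the standard $\mathsf{L}$-completeness of undirected connectivity.
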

\begin{proof}
We will show that if $f$ is in $\TCzero$, then $(s,t)$-connectivity is also in $\TCzero$.  The reduction is as follows. Create a $(u,v_1,v_2)$-connectivity problem $(H,u,v_1,v_2)$ by letting $H = G_1 \sqcup G_2$ be a disjoint union of two copies of $G$. Randomly choose $i \in \{1,2\}$, and let $u$ be the copy of $s$ in $G_i$. Let $v_1$ be the copy of $t$ in $G_1$ and let $v_2$ be the copy of $t$ in $G_2$. Also, permute the labels and the order of the edges by some permutation $\sigma$ that we will choose randomly. Finally, compute $a = f(H,u,v_1,v_2)$ and return true if $a = v_i$ and false otherwise. There are two cases:
\begin{itemize}
\item If $s$ and $t$ are connected in $G$, then $(H,u,v_1,v_2)$ is a well-formed $(u,v_1,v_2)$-connectivity problem, so $f(H,u,v_1,v_2)$ will always output $v_i$, and so the final answer is ``true''.
\item If $s$ and $t$ are not connected in $G$, then 
over the randomness of the label and edge permutations the probability that $f$ returns $v_i$ is exactly 1/2 (because the component in which $v_1$ resides and the component in which $v_2$ resides are indistinguishable).
\end{itemize}

Finally, repeat this procedure in parallel with $\poly(n)$ different random permutations, and return ``true'' if the answer for all repetitions is ``true'', and ``false'' otherwise. By a union bound, over the set of possible inputs, there is a deterministic choice of $\poly(n)$ 
permutations such that this procedure is correct on any size-$n$ input $(G,s,t)$. This overall procedure can thus be implemented in $\TCzero$ by hard-coding those permutations into the circuit for any $n$.

Thus, we have shown a $\TCzero$ circuit for $(s,t)$-connectivity. Recall that $(s,t)$-connectivity is complete for the class $\mathsf{L}$ under $\TCzero$ reductions (see e.g., \cite{barrington2000lecture,williams2019some}), so $\mathsf{L} \subseteq \TCzero$, concluding the proof.
\end{proof}

With these preliminaries, we arrive at Theorem~\ref{thm:expressivity-separation}, which is the formal statement of Theorem~\ref{thm:inf-expressivity-separation}, which was in the main text. We assume that there are two output tokens $\mathsf{yes}, \mathsf{no} \in \Sigma$, and the transformer's final token in the chain of thought is its response -- either $\mathsf{yes}$ or $\mathsf{no}$.

\begin{theorem}\label{thm:expressivity-separation}
We have the following results for $(s,t_1,t_2)$-connectivity problems of size $n$ and transformers.
\begin{itemize}
\item \textbf{Sequential scaling succeeds}: There is a constant $c > 0$ such that a log-precision transformer with a CoT of length $\leq n^c$ solves any $(s,t_1,t_2)$-connectivity problem. 
\item \textbf{Parallel scaling fails}: Assume that $\mathsf{L} \not\subseteq \TCzero$. Let $C_1,C_2 > 0$ be constants, and let $T : \Sigma^* \to \R^{|\Sigma|}$ be a polynomial-precision transformer. Let $m(n) := n^{C_2}$ be the number of chains of thought over which we take majority vote (breaking ties arbitrarily). Then there are infinitely-many $n$ such that there is a size-$n$ $(s,t_1,t_2)$-connectivity graph problem $(G,s,t_1,t_2)$ with answer $\mathrm{ans} \in \{\mathsf{yes}, \mathsf{no}\}$, such that
\begin{align*}
\P_{z_1,\ldots,z_{m(n)} \sim D_{T,C_1}(G,s,t)}[\mathrm{Majority}(z_{1,C_1},\ldots,z_{m(n),C_1}) = \mathrm{ans}] < 1/2 + 1/n\,.
\end{align*}
I.e., majority vote over $m(n)$ parallel chains of thought with length $C_1$ is correct with probability at most $1/2 + o(1)$.
\end{itemize}
\end{theorem}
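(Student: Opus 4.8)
The plan is to handle the two bullets separately, with the second requiring essentially all of the work.

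For the positive (sequential) direction, I would invoke the simulation result of \cite{merrill2024expressive}: a log-precision transformer equipped with a chain of thought of polynomial length can simulate any polynomial-time Turing machine. Since $(s,t_1,t_2)$-connectivity is decidable in polynomial time (run breadth-first search from $s$ and report whether $t_1$ or $t_2$ is reached), there is a constant $c$ and a transformer that solves every instance using a CoT of length $\leq n^c$. This step is immediate once the simulation result is in hand.

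For the negative (parallel) direction, I would argue by contradiction. Suppose that for all but finitely many $n$, on every size-$n$ instance the majority vote over $m(n) = n^{C_2}$ independent length-$C_1$ chains is correct with probability $\geq 1/2 + 1/n$; I will derive $\mathsf{L} \subseteq \TCzero$, contradicting the hypothesis. The first step is to pass from the transformer to a $\TCzero$ surrogate: by \cref{lem:autoregressive-tc0-approx-autoregressive-transformer} the law of a single length-$C_1$ chain is within total variation $2^{-p_1(n)}$ of the autoregressive law of a $\TCzero$ circuit $\hat T$, and by \cref{lem:tc0-approx-autoregressive-tc0} (using that $C_1$ is constant) the final token of that chain is, up to another $2^{-p_1(n)}$ of total variation, the output $\tilde T(x;r)$ of a single randomized $\TCzero$ circuit with polynomially many random bits $r$. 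The crux is then to observe that the aggregate, namely sample $m(n)$ independent seeds $r_1,\dots,r_{m(n)}$, evaluate $\tilde T(x;r_i)$ in parallel, and take the majority, is itself a single randomized $\TCzero$ circuit $M(x;R)$, since $m(n)$ parallel copies of a $\TCzero$ circuit together with one majority gate remain in $\TCzero$ with a polynomial number of random bits. I would control the approximation error by a product bound on total variation: the joint law of the $m(n)$ true final tokens differs from the joint law of the $\tilde T(x;r_i)$ by at most $m(n)\cdot 2^{-p_1(n)}$ in total variation, which can be forced below $1/(4n)$ by choosing $p_1$ a large enough polynomial; the data-processing inequality then transfers this bound to the majority. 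Consequently $M(x;R)$ outputs $\mathrm{ans}$ with probability $\geq 1/2 + 1/(2n)$ on every large instance. Finally I would apply the derandomization of \cref{lem:derandomize-tc0} with $p'(n) = 2n$ to extract a deterministic $\TCzero$ circuit $\dot T$ that solves $(s,t_1,t_2)$-connectivity exactly on all large $n$ (hardcoding the finitely many small sizes), and then \cref{lem:reduce-st-to-st1t2} yields $\mathsf{L} \subseteq \TCzero$, the desired contradiction.

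I expect the main obstacle to be the bookkeeping in the crux step: correctly propagating the total-variation error across $m(n)$ independent samples, so that the exponentially small per-sample error survives multiplication by the polynomial $m(n)$, and verifying that the parallel evaluation plus majority genuinely stays within $\TCzero$ with only polynomially many random bits. A secondary point requiring care is the interface with \cref{lem:derandomize-tc0}: the advantage after aggregation is only $1/\poly(n)$, so the derandomization must be applied with the inverse-polynomial gap in mind, and the two-symbol structure of the answer must be used to ensure that ``correct with advantage $1/(2n)$'' means the correct symbol is exactly the one selected by the lemma.
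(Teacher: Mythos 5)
Your proposal is correct and follows essentially the same route as the paper's proof: the positive direction via the simulation result of \cite{merrill2024expressive} plus BFS, and the negative direction by contradiction through \cref{lem:autoregressive-tc0-approx-autoregressive-transformer}, \cref{lem:tc0-approx-autoregressive-tc0}, the observation that majority over polynomially many parallel $\TCzero$ copies stays in $\TCzero$, \cref{lem:derandomize-tc0}, and finally \cref{lem:reduce-st-to-st1t2}. If anything, your bookkeeping of the total-variation error (subtracting the accumulated $m(n)\cdot 2^{-p_1(n)}$ from the advantage rather than letting it grow) is stated more carefully than in the paper's writeup, where the advantage is displayed as increasing from $1/n$ to $3/n$ across approximation steps.
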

\begin{proof}
For the positive result that sequential scaling succeeds, it is sufficient to use Corollary~2.1 of \cite{merrill2024expressive}, which implies that log-precision transformers with $t(n)$-length chain of thought can simulate Turing machines that run in time $t(n)$. Since $(s,t_1,t_2)$-connectivity is solvable in polynomial time (e.g. with breadth-first search), the first part of the theorem follows.

For the negative result that parallel scaling fails, we use the lemmas that we have developed above. Suppose by contradiction that for large enough $n$, we have for all size-$n$ problems $(G,s,t_1,t_2,\mathrm{ans})$ that
\begin{align*}
\P_{z_1,\ldots,z_{m(n)} \sim D_{T,C_1}(G,s,t_1,t_2)}[\mathrm{Majority}(z_{1,C_1},\ldots,z_{m(n),C_1}) = \mathrm{ans}] \geq 1/2 + 1/n\,.
\end{align*}
Then by Lemma~\ref{lem:autoregressive-tc0-approx-autoregressive-transformer} with precision $1/n$, and by triangle inequality, there is a $\TCzero$ function $\hat{T}$ such that for all large enough $n$ and all size-$n$ problems $(G,s,t_1,t_2,\mathrm{ans})$, we have
\begin{align*}
\P_{z_1,\ldots,z_{m(n)} \sim D_{\hat{T},C_1}(G,s,t_1,t_2)}[\mathrm{Majority}(z_{1,C_1},\ldots,z_{m(n),C_1}) = \mathrm{ans}] \geq 1/2 + 2/n\,.
\end{align*}
By Lemma~\ref{lem:tc0-approx-autoregressive-tc0} again with precision $1/n$, and by a triangle inequality, there is a $\TCzero$ function $\tilde{T}$ that approximates the autoregressively-applied $\hat{T}$, in the sense that there is a polynomial $\tilde{p}$ such that for any size-$n$ problem $(G,s,t_1,t_2,\mathrm{ans})$
\begin{align*}
\P_{r_1,\ldots,r_{m(n)} \sim \{0,1\}^{\tilde{p}(n)}}[\mathrm{Majority}(\tilde{T}(x;r_1),\ldots,\tilde{T}(x;r_{m(n)})) = \mathrm{ans}] \geq 1/2 + 3/n\,.
\end{align*}
Since $\mathrm{Majority}$ is a gate, the circuit $\mathrm{Majority}(\tilde{T}(x;r_1),\ldots,\tilde{T}(x;r_{m(n)}))$ is a $\TCzero$ function and so it can be derandomized by Lemma~\ref{lem:derandomize-tc0}. Using this lemma, yields a $\TCzero$ function $\dot{T}$ such that for any size-$n$ problem $(G,s,t_1,t_2,\mathrm{ans})$,
\begin{align*}
\dot{T}(G,s,t_1,t_2) = \mathrm{ans}\,.
\end{align*}
Using Lemma~\ref{lem:reduce-st-to-st1t2}, this 
implies $\mathsf{L} \subseteq \TCzero$, which contradicts our assumption that $\mathsf{L} \not\subseteq \TCzero$.

\end{proof}

\section{Evidence from vertex query model for sequential vs. parallel scaling separation}\label{proof:vertex query}

\subsection{Separation in vertex query model, Proof of Theorem~\ref{thm:vqm-path-task}}



In the VQM, we can prove the necessity of a minimum number of queries (corresponding to a minimum length for a chain of thought by our simplifying abstraction that the VQM models the capabilities of transformers with bounded chain-of-thought).

\begin{theorem}[Minimum number of VQM queries needed for graph connectivity; restatement of Theorem~\ref{thm:vqm-path-task}]
Consider the graph $G$ given by two disjoint 
paths of length $L \geq 3$ with randomly permuted vertex IDs. Suppose $s,t_1,t_2$ are distinct endpoints of these paths such that $s$ and $t_i$ are on the same path for exactly one $i \in \{1,2\}$.
Then
\begin{itemize}
    \item \textbf{$\Omega(L)$ queries needed:} For any VQM algorithm that executes $q \leq (L-2)/2$ queries, the probability of correctness of the algorithm on $(s,t_1,t_2)$-connectivity is exactly $1/2$.
    \item \textbf{$O(L)$ queries sufficient:} There is a VQM algorithm that executes $L-1$ queries and solves the $(s,t_1,t_2)$-connectivity problem with probability 1.
\end{itemize}
\end{theorem}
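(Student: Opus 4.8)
The plan is to treat the two directions separately: the $O(L)$ upper bound via an explicit deterministic traversal, and the $\Omega(L)$ lower bound by showing that too few queries leave the two targets statistically indistinguishable, so no decision rule can beat a coin flip. For the \textbf{upper bound}, I would simply walk along the path containing $s$: query $s$ to learn its unique neighbor, then repeatedly query the most recently discovered vertex. Since every internal vertex has exactly two neighbors, at each step exactly one returned vertex is new, so following it traces out $s$'s path one vertex at a time. After $L-1$ queries the walk reaches the far endpoint of $s$'s path, which is one of $t_1,t_2$; output that target. This never errs, giving probability $1$ with exactly $L-1$ queries; the only things to verify are the off-by-one count and that the degree-$\le 2$ structure leaves no ambiguity, both immediate.

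For the \textbf{lower bound} I would proceed in two steps. The first is a \emph{combinatorial} step showing that $q \le (L-2)/2$ queries cannot produce a ``connection certificate'': in the subgraph of \emph{discovered} edges (all edges incident to the set $Q$ of queried vertices), the source $s$ is never in the same component as $t_1$ or $t_2$. The reason is that the unique path from $s$ to a target consists of $L-1$ edges, so linking them would require every one of those edges to be incident to $Q$; since a single query to a vertex exposes at most its two incident edges, at least $\lceil (L-1)/2\rceil$ queries are needed, and $(L-2)/2$ is strictly smaller. Hence after all queries the algorithm has seen only pairwise-disjoint partial segments, with $s$'s segment separated from both target segments.

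The second is a \emph{symmetry} step turning ``no certificate'' into ``posterior exactly $1/2$''. I would condition on the algorithm's transcript (the adaptive sequence of queries and neighbor responses) and argue that, given the transcript, the events ``$s$ is connected to $t_1$'' and ``$s$ is connected to $t_2$'' are equally likely. Concretely, I would exhibit a measure-preserving involution $\Phi$ on instances that fixes every response seen along the execution but swaps which target shares $s$'s path. Because the discovered segments through $s$, through $t_1$, and through $t_2$ are pairwise disjoint by the first step, the hidden vertices can be stitched into two full paths realizing either answer while leaving the revealed structure, and therefore the entire deterministic execution, unchanged; $\Phi$ interchanges these two completions. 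Since $\Phi$ maps correct instances bijectively to incorrect ones, the success probability is exactly $1/2$ for every decision rule, adaptive ones included (after conditioning on the algorithm's internal randomness).

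I expect the \textbf{main obstacle} to be making this symmetry step fully rigorous under adaptivity: one must define $\Phi$ so that it simultaneously preserves the uniform measure on labelings, preserves every response to the \emph{actual} adaptively chosen queries (so the algorithm cannot distinguish the two instances and in particular issues identical future queries), and genuinely flips the answer. The cleanest route is probably a deferred-decisions / adversary formulation in which the oracle answers lazily and maintains at all times a pair of symmetric consistent completions, one per answer; the combinatorial step guarantees the oracle is never forced to collapse this symmetry within $(L-2)/2$ queries. Care is also needed to rule out deduction by elimination, e.g.\ the algorithm partially exploring the target-free path, but the same edge-counting bound handles this since pinning down an entire path is even costlier than forming a single $s$-to-target certificate.
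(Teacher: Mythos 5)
Your upper bound coincides with the paper's (walk along $s$'s path, $L-1$ queries, certainty), and your lower-bound strategy---an answer-flipping, view-preserving, measure-preserving involution on instances---is the right one in spirit. The gap is in the justification for why such an involution exists. You deduce it from the claim that the discovered segments through $s$, $t_1$, and $t_2$ are pairwise disjoint (``no certificate''), asserting that the hidden vertices can then be ``stitched into two full paths realizing either answer.'' Disjointness alone does not give this: the completions must also respect the constraint that each path has exactly $L$ vertices and that $s,t_1,t_2$ (and the fourth, unnamed endpoint) sit at path ends, and disjoint segments can in principle admit a completion for only one answer. For example, a traced segment of $k$ vertices anchored at $s$ and one of $m$ vertices anchored at $t_1$ with $k+m>L$ cannot share a path, forcing the answer to be $t_2$ with no $s$--$t_i$ certificate in sight; likewise a discovered path from $t_1$ to the fourth endpoint forces the answer without any $s$-side certificate. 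Your parenthetical on ``deduction by elimination'' covers only the case of pinning down an entire path, and even where both answers are feasible, ``exactly $1/2$'' requires the two families of consistent completions to be in measure-preserving bijection, not merely both nonempty. So the second step is a genuine hole---one you correctly flag as the main obstacle, but whose resolution is the actual content of the lower bound.

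The paper closes exactly this hole with a one-line pigeonhole that replaces the global stitching argument. Writing the paths as $u_1,\dots,u_L$ and $v_1,\dots,v_L$, each queried vertex $u_j$ or $v_j$ blocks at most the two indices $i\in\{j-1,j\}$, so $q\le (L-2)/2$ queries block at most $L-2$ of the $L-1$ indices and some column $i$ has all four of $u_i,v_i,u_{i+1},v_{i+1}$ unqueried. Swapping the tails $u_{i+1}\cdots u_L \leftrightarrow v_{i+1}\cdots v_L$ is then a concrete measure-preserving involution that alters no queried vertex's neighborhood (so the adaptive execution, and hence the output, is unchanged) while flipping which target is reachable from $s$; this yields success probability exactly $1/2$ in one stroke, with no case analysis of packing constraints. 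Your edge-counting (each query reveals at most two path edges) is essentially the same resource bound, but you would still need this localization---or an exhaustive treatment of all length-based obstructions---to complete the proof.
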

\begin{proof}
For the positive result, consider the algorithm that queries $s$, then the neighbor of $s$, and so on, until it reaches the other end of the path. This takes at most $L-1$ queries, and reaches either $t_1$ or $t_2$, at which point the algorithm has enough information to return the correct answer.

For the analysis of the negative result, let $u_1,\ldots,u_L$ denote the ordered vertices of the first path and let $v_1,\ldots,v_L$ denote the ordered vertices of the second path. Let the algorithm run and make $q \leq (L-2)/2$ queries. By the pigeonhole principle there must be an $i \in \{1,\ldots,L-1\}$ such that the algorithm has not queried  $u_i,v_i,u_{i+1}$ and $v_{i+1}$. Now note that if we additionally reveal the neighborhoods of $u_1,\ldots,u_{i-1},u_{i+2},\ldots,u_L$ and $v_1,\ldots,v_{i-1},v_{i+2},\ldots,v_L$ with vertex queries then the algorithm still has probability of success $1/2$, since it is equally likely given its information that $u_i$ is connected to $u_{i+1}$ as it is for $v_i$ to be connected to $v_{i+1}$.
\end{proof}

\subsection{Proof of \cref{thm:vertex query}}
\begin{proof}
    First we will show the lower bound.
    
    
    Without loss of generality, assume that the model will explore from $s$, and stop when it reaches $t_1$ or $t_2$ (note that because the vertex labels are uniformly random, there is no other way of getting a higher than $50\%$ success rate than finding $t_1$ or $t_2$ when starting from $s$).

    To get from $s$ to $t_b$, the algorithm must explore each intersection (those vertices with degree greater than two). To get from the current intersection to the next one, the algorithm has no way to distinguish between the long and short path until it explores at least $l$ vertices, and so there is at most a $1/2$ chance the model takes $l$ oracle calls to get to the next intersection, and at least a $1/2$ chance it takes $2l$ oracle calls (if it takes the long path for $l$ vertices, then any node it has discovered is still $l$ vertices away from the next intersection, so it must make at least $l$ more calls). Since there are $d$ intersections\footnote{Including $s$, for which the same logic applies when getting from $s$ to the next intersection.}, a standard Chernoff bound for iid Bernoulli random variables shows that the probability of finding $t_b$ in at most $(1-\delta)\frac{3}{2}ld$ oracle calls is at most $\exp\left(-\frac{1}{2}\delta^2\frac{3}{2}d\right)$, and if we don't find $t_b$, then the best the algorithm can do is guess, and get a $1/2$ probability of being correct, yielding the desired result.

    For the upper bound, we will consider this algorithm: each time we reach a new intersection (including the start), choose an unexplored neighbor, and explore down that path for $l$ vertices, and if the next intersection is not found, try one of the other unexplored paths from before.

    At a new intersection, the algorithm has three unexplored paths: 
    \begin{enumerate}[topsep=0pt, partopsep=0pt, itemsep=2pt, parsep=0pt]
        \item The short path to the next intersection
        \item The long path to the next intersection
        \item The path to the previous intersection it didn't take
    \end{enumerate}
    So, notice that the algorithm we defined has a $1/3$ chance of taking $l$ oracle calls to reach the next intersection, a $1/3$ chance of taking $2l$, and $1/3$ chance of taking $3l$. Using Hoeffdings inequality, the probability the algorithm takes more than $(1+\delta)2 ld$ oracle calls is at most $\exp(-2d\delta^2)$, so the algorithm succeeds with at least one minus this probability.

\end{proof}


 \begin{figure}
    \includegraphics[width=0.49\linewidth]{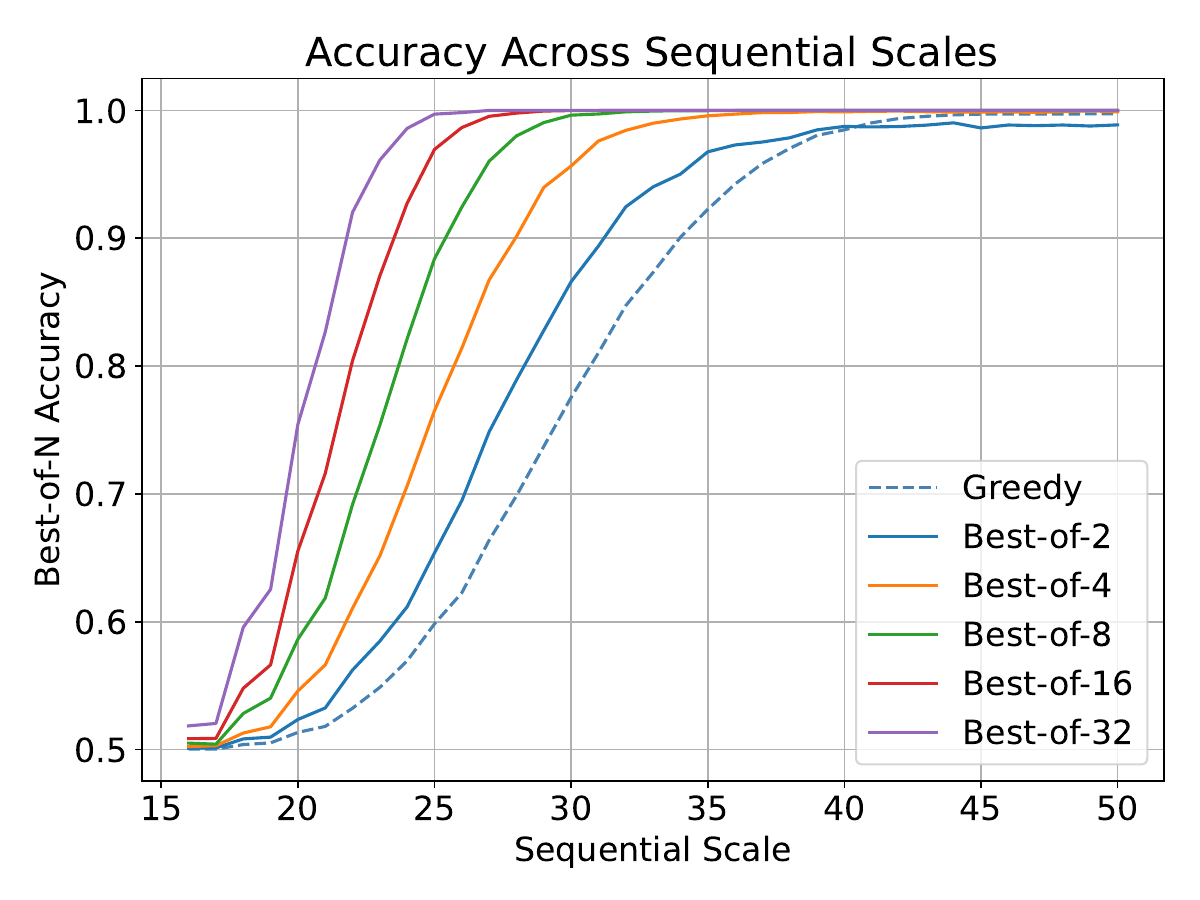}
    \includegraphics[width=0.49\linewidth]{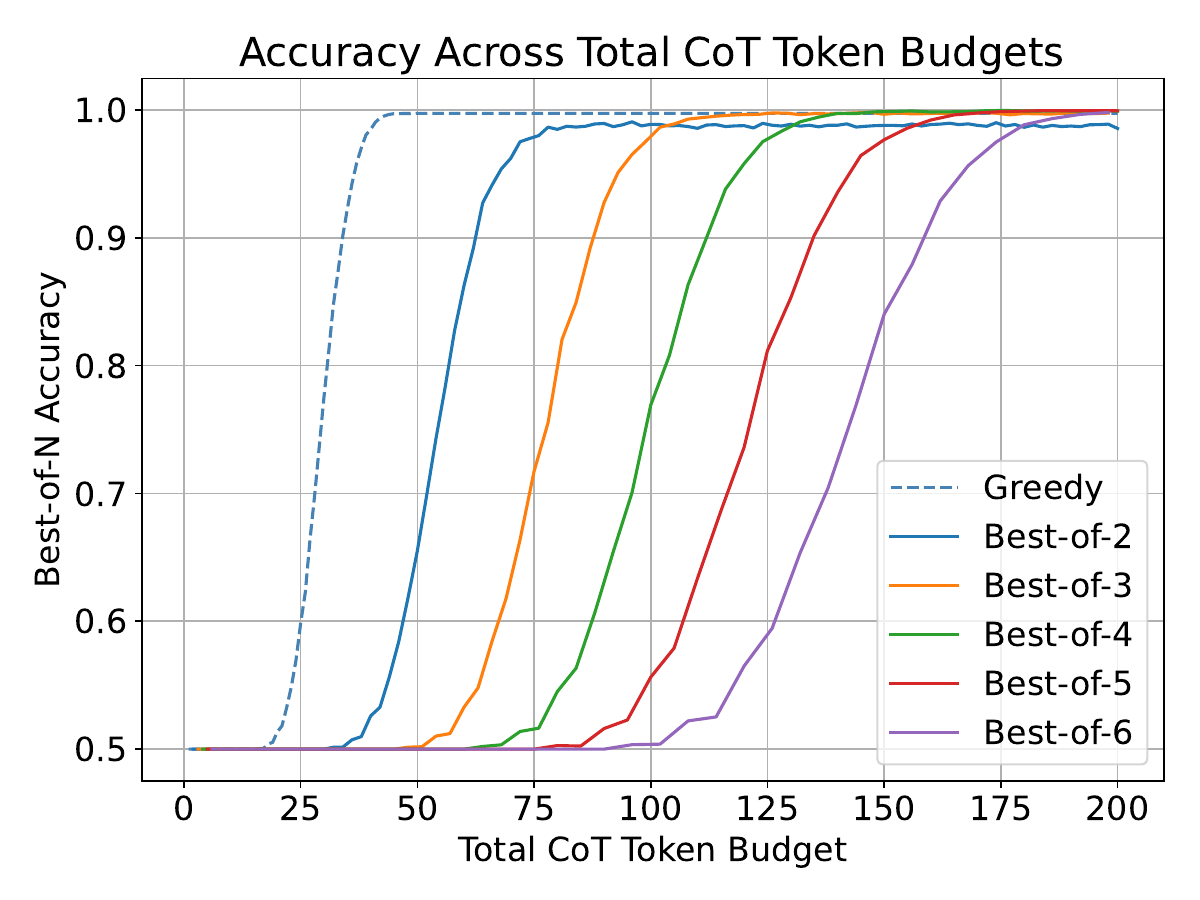}    
    \caption{Best-of-N accuracy for parallel scaling of the model trained with DFS CoT strategy on \texttt{Bridge(5)} task (left) across sequential scales (maximum CoT length) and (right) total CoT token budget. Outputs are sampled with temperature 1.0 for parallel scaling.}
    \label{fig:token-budget}
\end{figure}

\begin{figure}
    \centering
    \includegraphics[width=0.8\linewidth]{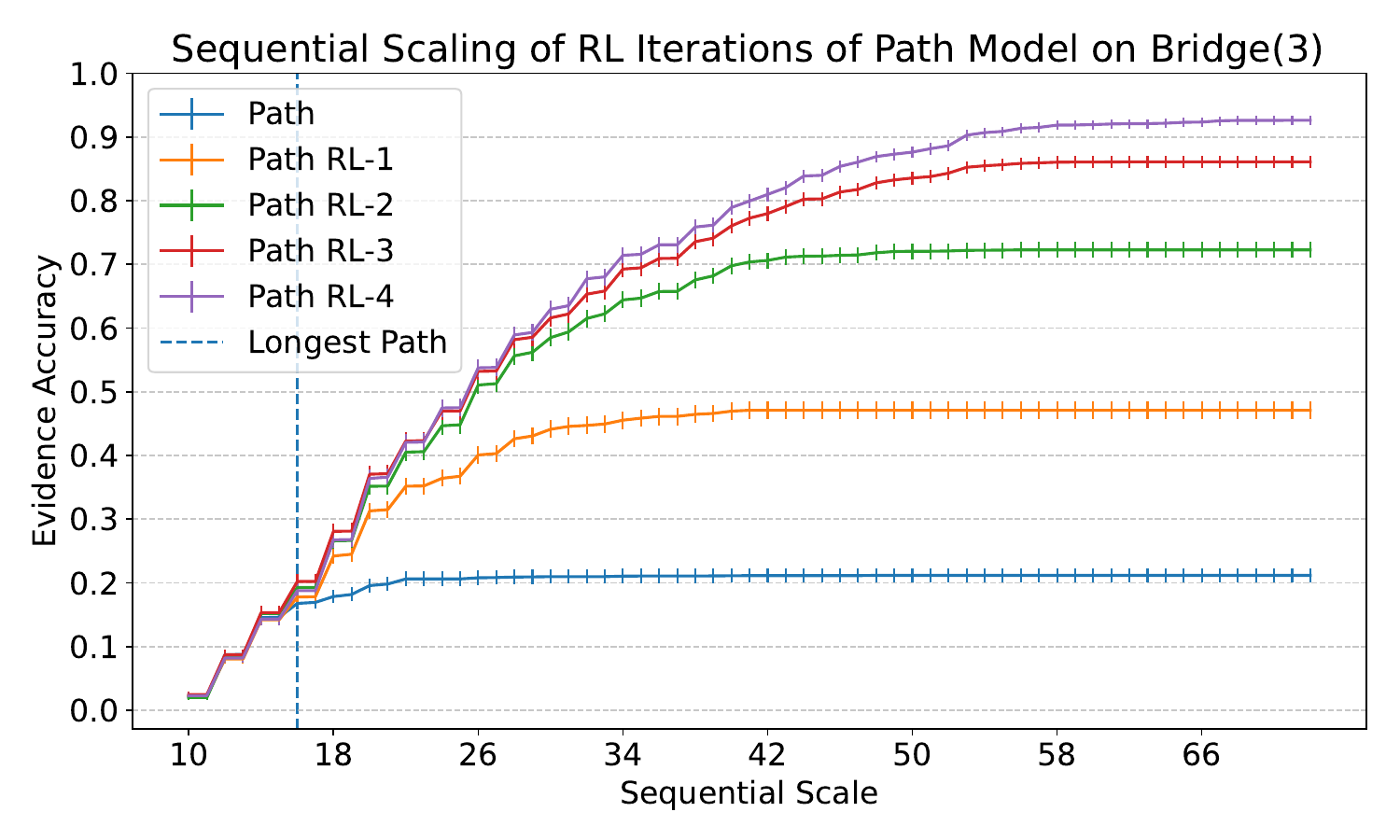}
    \caption{Evidence accuracy of Path model before and after RL iterations with different sequential CoT budgets on the \texttt{Bridge(3)} task. Error bars represent 95\% binomial confidence intervals. }
    \label{fig:rl-seq}
\end{figure}

\section{Experimental details and further experiments}
\label{sec:exp-details}

\subsection{Training}
For each CoT strategy and task, we train a Mistral causal language model~\citep{jiang2023mistral7b, wolf2020transformers} with 4 hidden layers, 4 attention heads, and intermediate size 128 with a context length of 400 for 200 epochs on NVIDIA A100 GPU with 40GB memory. We sweep through the learning rate values in \{1e-4, 3e-4, 1e-3, 3e-3\} and train the model for 200 epochs with a batch size of 1000. We have also experimented with different weight decay values and learning rate schedules, but we found no significant difference in the results and used 0.05 weight decay and a cosine learning rate schedule, with a 0.1 warm-up ratio. We use the same hyperparameters for RL iterations, except that we fine-tune the model for 20 epochs at each iteration. Each pretraining experiment takes under 12 GPU hours, while fine-tuning for RL takes under 3 GPU hours. Additionally, debugging and hyperparameter tuning for each experiment took under 72 GPU hours.

\subsection{Sequential scaling of walk strategies}
\label{exp:walks}
\paragraph{Experiment setup}
To study sequential scaling of CoTs in a controlled setting, we also ran experiments with a CoT strategy with tunable scale. A \textbf{Walk-L} CoT is generated by sampling a random walk that starts at the source node, conditioned on visiting the target node within at most \(L\) steps. 
Hence, models trained with Walk-L strategies at different scales $L$ are exposed to successful traces of random walk on the same task, but with different number of steps the walk is allowed to take to reach the target. As \(L\) increases, the CoTs become longer, less optimal, and more exploratory. 
\paragraph{Results} After training models for the \texttt{Bridge(5)} task with Walk-L CoT strategies, we find that the accuracy of the models consistently increases with \(L\), which shows that the models trained on more exploratory and longer walks perform better (See~\cref{fig:walks}).

\subsection{Experiment with a smaller transformer}
\paragraph{Experiment setup}
We also ran experiments using smaller transformer models with 2 hidden layers, and a variant of DFS strategy called \textbf{DFS-BT}. In a CoT of DFS-BT strategy, we include the whole DFS trace, which is a walk in the DFS tree including the backtracking steps.

\paragraph{Results}
We find that small models trained on DFS-BT CoTs solve the task consistently, while small models trained on DFS CoTs fail to solve the \texttt{Bridge} tasks of larger depths (See~\cref{fig:one-d-accuracy}), which can be explained by the smaller model's more limited expressivity.

\subsection{LLM experimental details}
For the AIME2024 experiment, we used H200 GPUs. Each run took approximately 1.5 hours, for a total of about 24 H200 GPU-hours.
For the graph connectivity experiment, we used vllm and 2 A100 GPUs ($80$ GB of memory each) for inference. The experiments to make each plot took less than four hours each. Debugging and hyperparameter tuning took under 120 GPU hours. We constructed $32$ random labelings of the bridge graph, and then, using prompts of the form \cref{fig:llm_prompt}, create CoTs of 4096 tokens. Depending the model, we added the appropriate special tokens to make the input prompt from the user, and to make the model use thinking mode during the CoT. Each model recommended using temperature $0.6$ for thinking, which we did. We used a custom logit processor to make the model substitute the end thinking token and the eos token with the token for "wait", inspired by \cite{s1}. Then we truncate the CoT at intervals evenly spaced by tokens, and append the end of thinking token, and ``Answer: Node [start node label] is in the same connected component as node '' before using the model to find the logits for the next token. The model is considered correct if the logit for the correct node is higher than the logit for the incorrect node introduced in the initial prompt\footnote{With some tie breaking when the logits are within $1\mathrm{e}{-8}$ of each other. We found that techniques weighting the confidence by the magnitude of the logits or their difference did not significantly change any results.}. For parallel scaling, we generated up to $64$ distinct CoTs for each graph, and analytically calculated the probability that a random subsample would vote for the correct or incorrect solution (or tie).
All of the results have a standard deviation of at most $0.08$.
\subsection{LLM additional experiments}
In \cref{fig:path_llm_experiments} also tested the LLMs on a setting closer to the setting of \cref{thm:vqm-path-task}, where the graph to be explored is two disjoint paths, and we once again confirm the theory, and see similar trends to those in \cref{fig:leading}.

\begin{figure}
    \centering
    \includegraphics[width=0.32\linewidth]{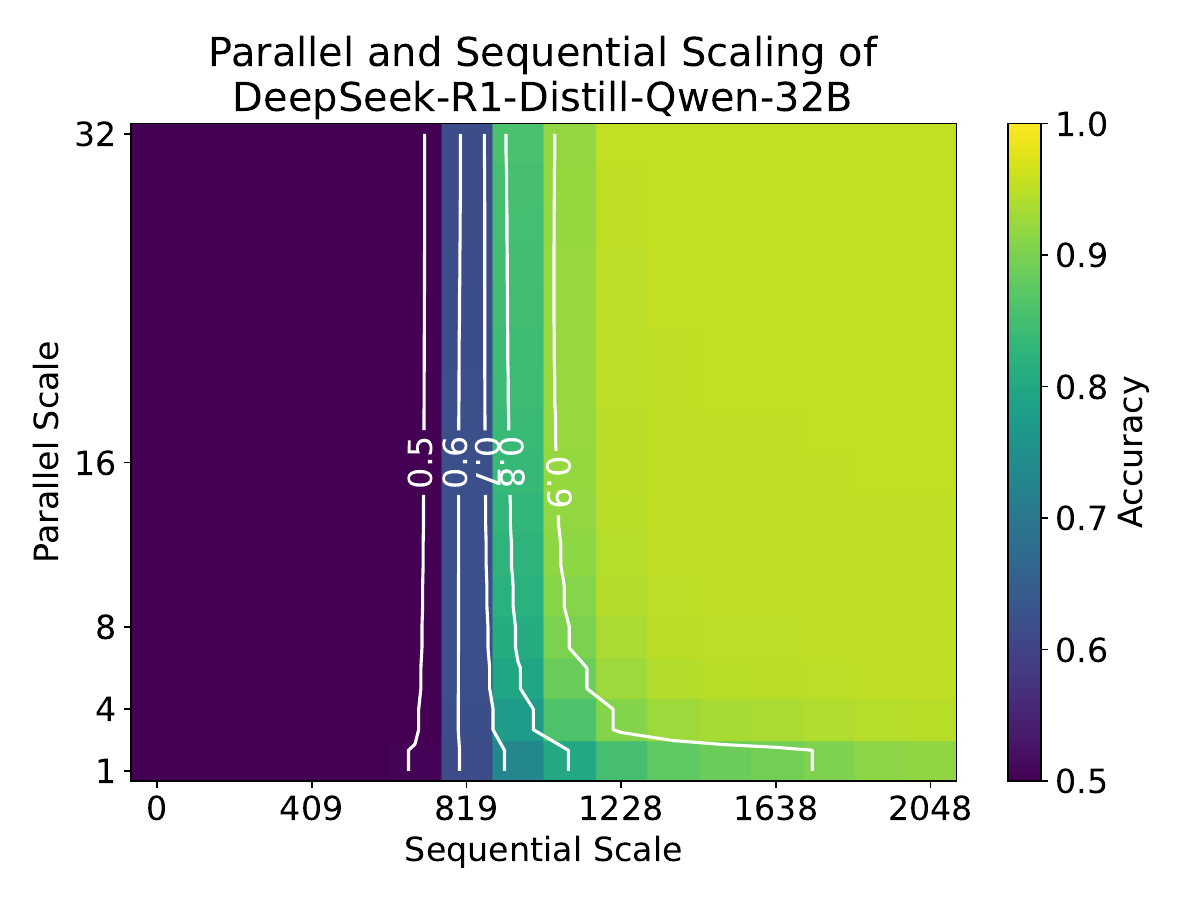}
    \includegraphics[width=0.32\linewidth]{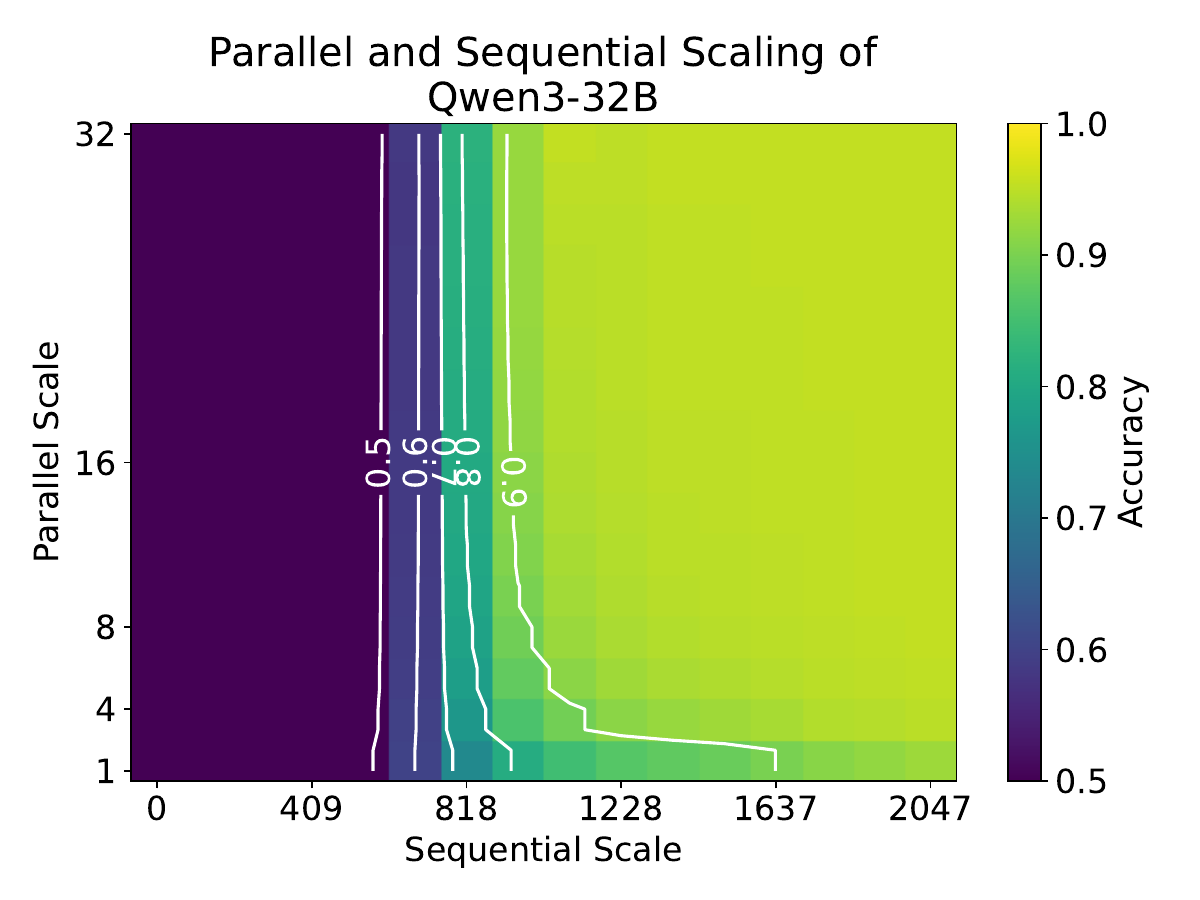}
    \includegraphics[width=0.32\linewidth]{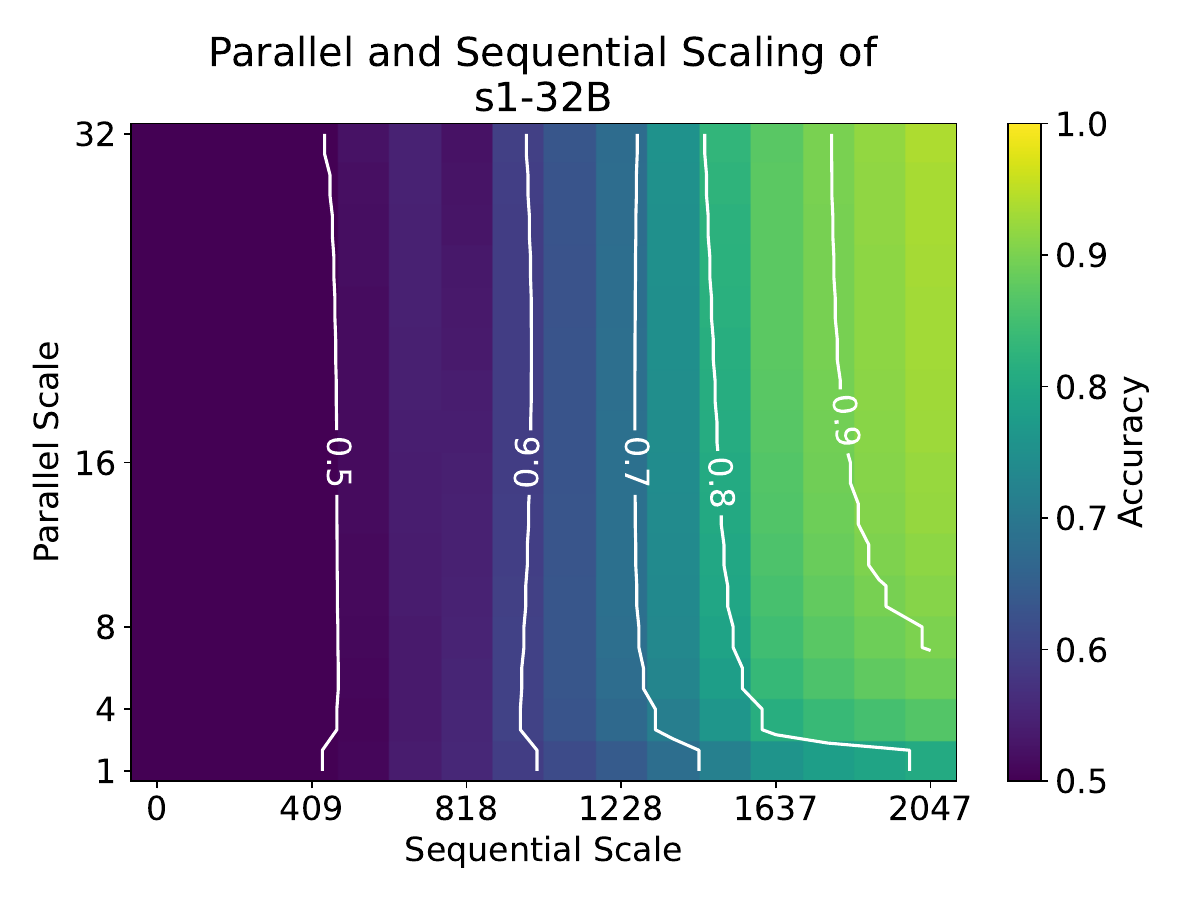}
    \caption{A comparison of parallel and sequential scaling for three LLMs tested on the $(s, t_1, t_2)$-connectivity problem for a graph that is the disjoint union of two paths. Note the similar trend to \cref{fig:leading}.}
    \label{fig:path_llm_experiments}
\end{figure}

\begin{figure}
    \centering
    \begin{tcolorbox}[
      title={\small \textbf{Prompt format}},
      width=1.0\columnwidth
    ]
    \fontsize{7pt}{7pt}\selectfont
    \ttfamily
Given the following list of undirected edges in a graph (with nodes labeled 0 through 33), is node 0 in the same component as 10 or as 27? (it is connected to exactly one of the two) Think step by step.\\
(11, 12), (23, 24), (6, 7), (25, 17), (4, 5), (27, 28), (9, 10), (2, 16), (13, 14), (2, 3), (5, 6), (18, 17), (10, 11), (3, 4), (31, 32), (18, 19), (19, 33), (30, 31), (20, 21), (2, 9), (24, 25), (15, 16), (12, 13), (7, 8), (19, 20), (1, 2), (32, 33), (29, 30), (14, 15), (28, 29), (1, 0), (8, 0), (21, 22), (19, 26), (22, 23), (26, 27)
    \normalfont
    \end{tcolorbox}
\caption{Example prompt from the LLM experiments. The prompt includes basic instructions for the task, along with the recommendation to think step by step (to avoid the model responding immediately with a guess, and then spending the rest of the chain of thought trying to justify it).}
    \label{fig:llm_prompt}
\end{figure}

\section{Further related work}\label{app:further-related}

\paragraph{Expressivity of transformers with CoT} The representational power of transformers has been studied in several works~\citep{sanford2024transformers, xu2020neuralnetworksreasonabout, merrill-sabharwal-2023-parallelism, jorge2021, strobl2024}. Recent work also highlights the expressivity and sample efficiency gains of reasoning with chain-of-thoughts \citep{wen2024sparse, kim2024transformers, feng2023revealingmysterychainthought, malach2024autoregressive, merrill2024expressive, li2024chain, nowak2025representationalcapacityneurallanguage}. In particular, many studies use graph-based tasks as a testbed for studying multi-step reasoning with CoTs~\citep{abbe2024how, kim2025metastable, sanford2024understanding}.

\paragraph{Test-time scaling}
Extensive work focused on scaling inference-time compute optimally~\citep{welleck2024from, snell2025scaling, setlur2025scaling, arora2025traininglanguagemodelsreason}, in search of inference-time scaling laws~\citep{wu2024inference, levi2024simple, liu20251bllmsurpass405b, jones2021scaling}. A line of work has focused on studying optimal sequential scaling~\cite{s1, yang2025thinkingoptimal, lee2025criticalthinking, chen2025think23overthinkingo1like} by examining the role of CoT length~\citep{nayab2025concisethoughtsimpactoutput, fu2023complexitybasedpromptingmultistepreasoning,  jin2024impactreasoningsteplength, wu2025chainofthoughtlength}. The benefits of learning to search~\citep{gandhi2024stream, lehnert2024abetterplanningtransformers, moon2024guidedstreamsearchlearning, sel2024algorithmthoughtsenhancingexploration} and problem-solving strategies like backtracking and self-correction~\citep{singh2025improving, gandhi2025cognitive, kumar2024training} by scaling the CoT length have also been demonstrated~\citep{min2024imitateexploreselfimprovereproduction, yang2025pencillongthoughtsshort, xiang20252reasoningllmslearning}, as well as the limits of these approaches~\citep{lin2025zebralogicscalinglimitsllms, ma2025reasoningmodel}. Another line of work has studied parallel scaling~\citep{wang2023selfconsistency, brown2024monkeys} by examining the behavior of majority voting or a best-of-$n$ method over a diverse set of responses generated in parallel~\citep{huang2025bestofnbestthemcoverage, chen2024llmcallsneedscaling}. 
Finally, the role of reinforcement learning~\citep{setlur2024rewarding, hou2025advancing, jia2025needverifystepstep} in advancing reasoning by improving the CoT quality and scaling it naturally~\citep{r1, yeo2025demystifying, li2025llmseasilylearnreason} has been explored.

\begin{figure}
  \centering
  \includegraphics[width=0.6\linewidth]{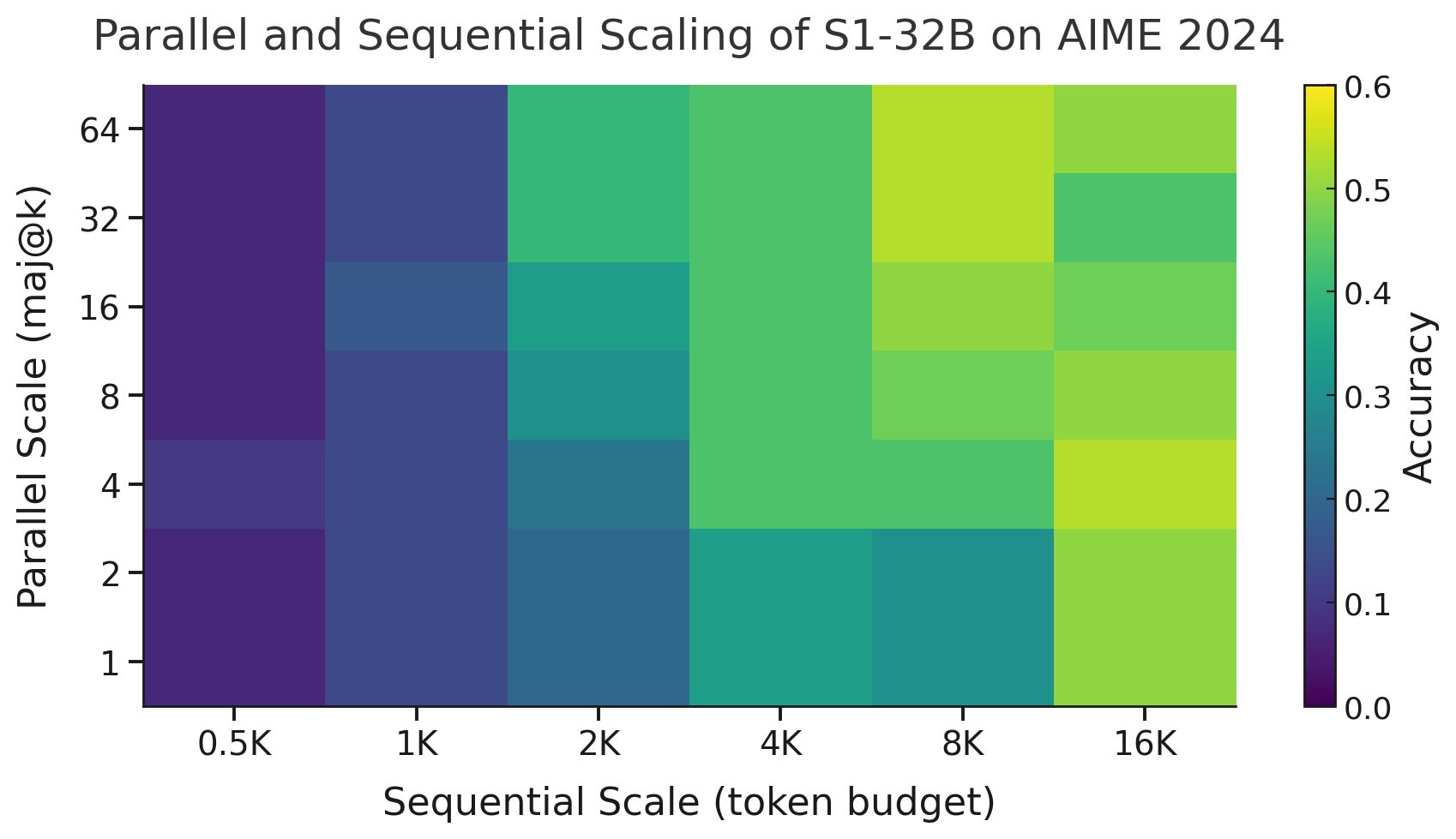}
  \caption{A comparison of parallel and sequential scaling for s1-32B~\cite{s1} on AIME2024~\citep{MAA_AIME2024}. For parallel scaling, answers are sampled with temperature 1.0 and aggregated by majority vote.}
  \label{fig:aime2024-results}
\end{figure}



\begin{figure}
    \centering
    \includegraphics[width=0.35\linewidth]{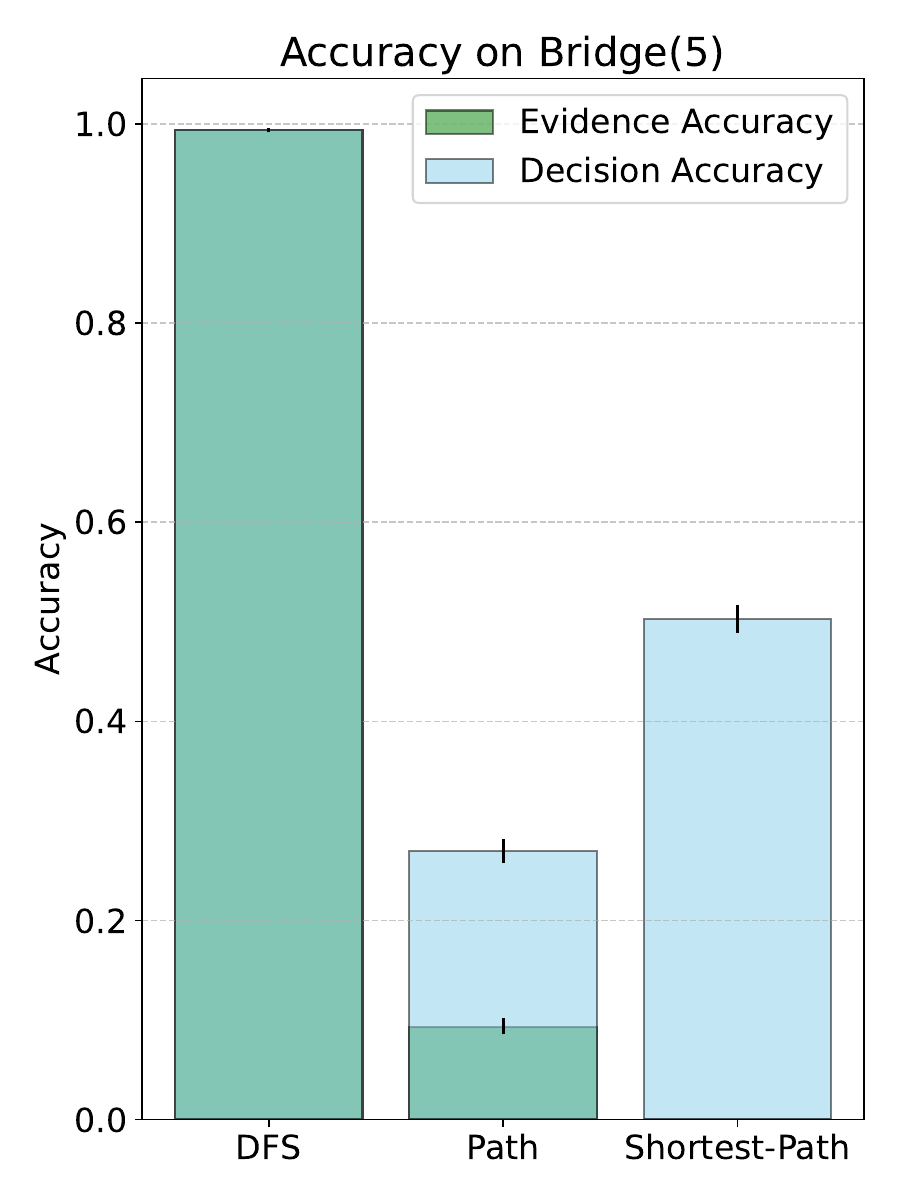}
    \includegraphics[width=0.35\linewidth]{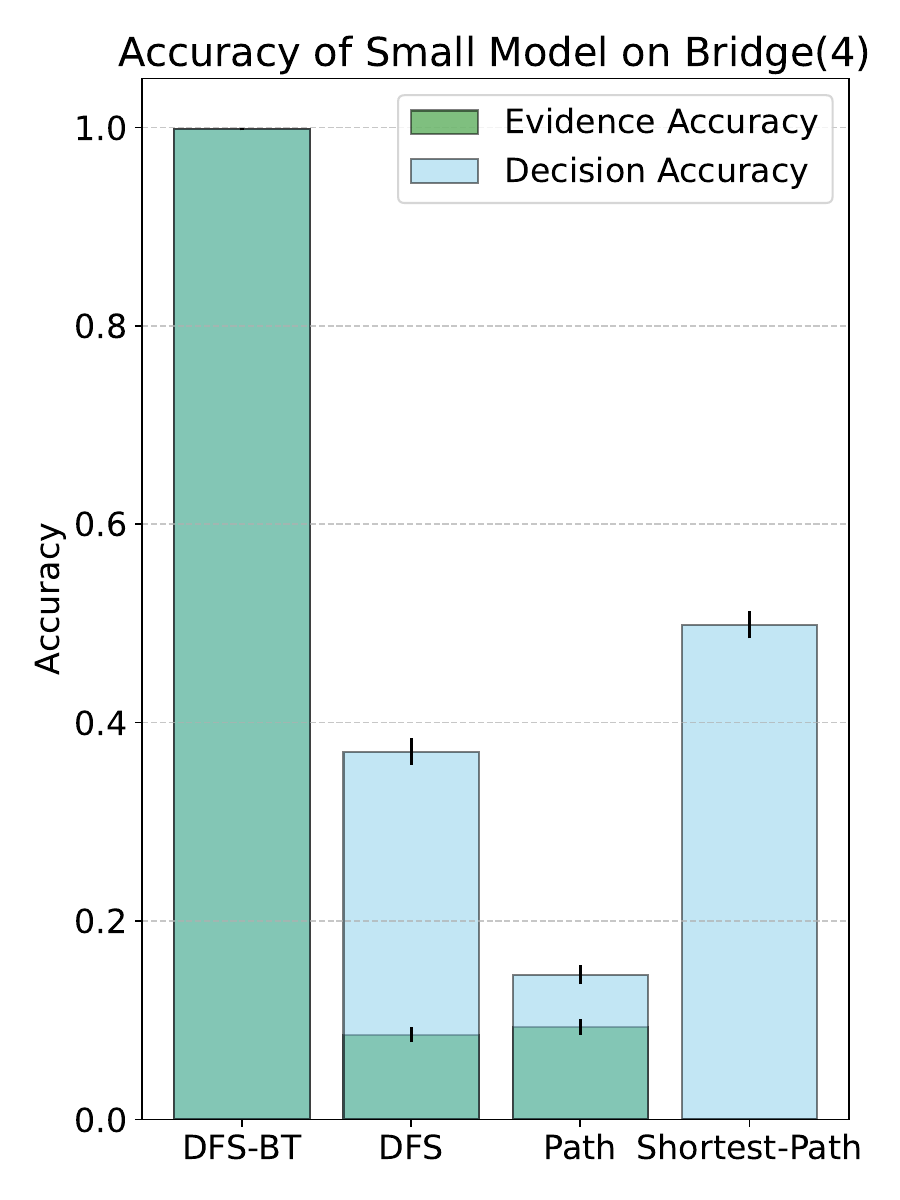}
    \caption{Decision and evidence accuracy of (left) models trained on CoT strategies for \texttt{Bridge(5)} task, and (right) models with 2 hidden layers trained on CoT strategies, including DFS-BT and DFS, for \texttt{Bridge(5)} task. Error bars represent 95\% binomial confidence intervals.}
    \label{fig:one-d-accuracy}
\end{figure}

\begin{figure}
  \centering
  \includegraphics[width=\linewidth]{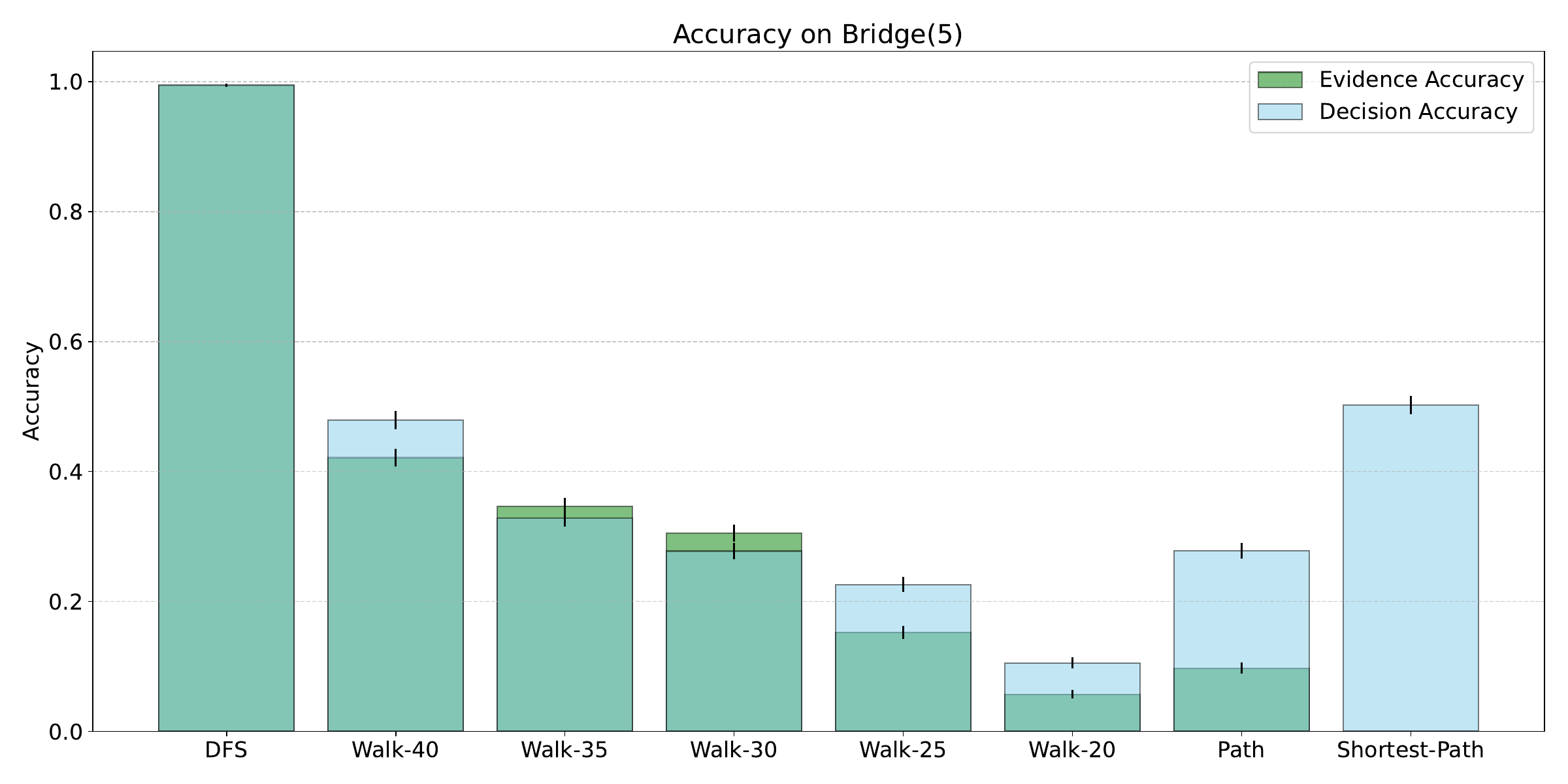}  
  \caption{Decision and evidence accuracy of models trained on Walk CoT strategies for \texttt{Bridge(5)} task. Error bars represent 95\% binomial confidence intervals.}
  \label{fig:walks}
\end{figure}



\end{document}